\documentclass[]{gewu}

\usepackage{wasysym}
\usepackage[toc,page,header]{appendix}
\usepackage{minitoc}
\usepackage{url}
\usepackage{amssymb}
\usepackage{amsmath}
\usepackage{mathtools}
\usepackage{amsthm}
\usepackage{wrapfig}
\usepackage{enumitem}
\usepackage{arydshln}
\usepackage{gensymb}
\usepackage{makecell}
\usepackage{algorithm}
\usepackage{algorithmic}  
\theoremstyle{plain}
\newtheorem{theorem}{Theorem}

\theoremstyle{definition}

\theoremstyle{remark}

\title{Information-Theoretic Decomposition for \\ Multimodal Interaction Learning}

\author[1,2,3]{Zequn Yang}
\author[1,2,3]{Yake Wei}
\author[4]{Haotian Ni}
\author[5]{Zhihao Xu}
\author[1,2,3,\textnormal{\Letter}]{Di Hu}

\affiliation[1]{Gaoling School of Artificial Intelligence, Renmin University of China, Beijing}
\affiliation[2]{Beijing Key Laboratory of Research on Large Models and Intelligent Governance}
\affiliation[3]{Engineering Research Center of Next-Generation Intelligent Search and Recommendation, MOE}
\affiliation[4]{Beihang University, Beijing}
\affiliation[5]{Gaotu Techedu Inc.}

\contribution[\textnormal{\Letter}]{Corresponding author}

\MyEmail{Zequn Yang at \email{zqyang@ruc.edu.cn}, Di Hu at \email{dihu@ruc.edu.cn}}

\checkdata[Code]{\url{https://github.com/GeWu-Lab/DMIL}}

\abstract{Multimodal learning hinges on capturing redundant, unique, and synergistic information across modalities, which collectively constitute multimodal interactions. A critical yet underexplored challenge is that these implicit interactions vary dynamically across samples. In this work, we present the first systematic, information-theoretic analysis highlighting why learning these dynamic, sample-specific interactions is critical for effective multimodal learning. Our analysis further reveals deficits in conventional paradigms at learning these distinct interaction types: modality ensemble approaches struggle to capture synergy, while joint learning paradigms often under-utilize redundant information. This highlights the need for an approach that can adaptively learn from different interaction types on a per-sample basis.
To this end, we propose \textit{Decomposition-based Multimodal Interaction Learning} (DMIL), a novel paradigm that explicitly models and learns from sample-specific interactions. First, we design a variational decomposition architecture to isolate the constituent interaction components. Second, we employ a new learning strategy that leverages these explicit interaction components in a fine-tuning process to achieve comprehensive interaction learning. Extensive experiments across diverse tasks and architectures demonstrate that DMIL consistently achieves superior performance by adapting to holistic sample-specific interactions. Our framework is flexible and broadly applicable, establishing an interaction-centric paradigm for multimodal learning. The code is available at https://github.com/GeWu-Lab/DMIL.}

\begin{document}
\maketitle
\section{Introduction}



Multimodal interactions describe the diverse pathways through which information is generated and integrated from multiple modalities. For instance, one primary pathway is information consistency, where different modalities provide overlapping or aligned information, such as an image and its caption describing the same concept~\cite{bossard2014food}. Conversely, information can be complementary, where crucial insights emerge from the interplay or discrepancies between modalities. A clear example is sarcasm detection, which often relies on a positive facial expression paired with a negative vocal tone~\cite{attardo2003multimodal}.
To formalize these pathways, Liang et al.~\cite{liang2023quantifying} introduced a framework for \textit{multimodal interaction}, categorizing how multimodal information is composed. This framework identifies information as either \textit{redundant} (shared across modalities), \textit{unique} (specific to one modality), or \textit{synergistic} (emerging only from their joint integration). Understanding these data-intrinsic interactions is crucial, as they define optimal pathways for information processing and inform the design of multimodal models. This is exemplified by approaches promoting cross-modal consistency~\cite{cui2024novel,zhang2024cross} or maximizing information integration~\cite{liang2024factorized, dufumier2024align}. Consequently, a formal understanding and systematic exploration of these interactions are critical for developing advanced and trustworthy multimodal learning systems.

\begin{wrapfigure}{r}{0.5\linewidth}
    \centering
    \includegraphics[width=\linewidth]{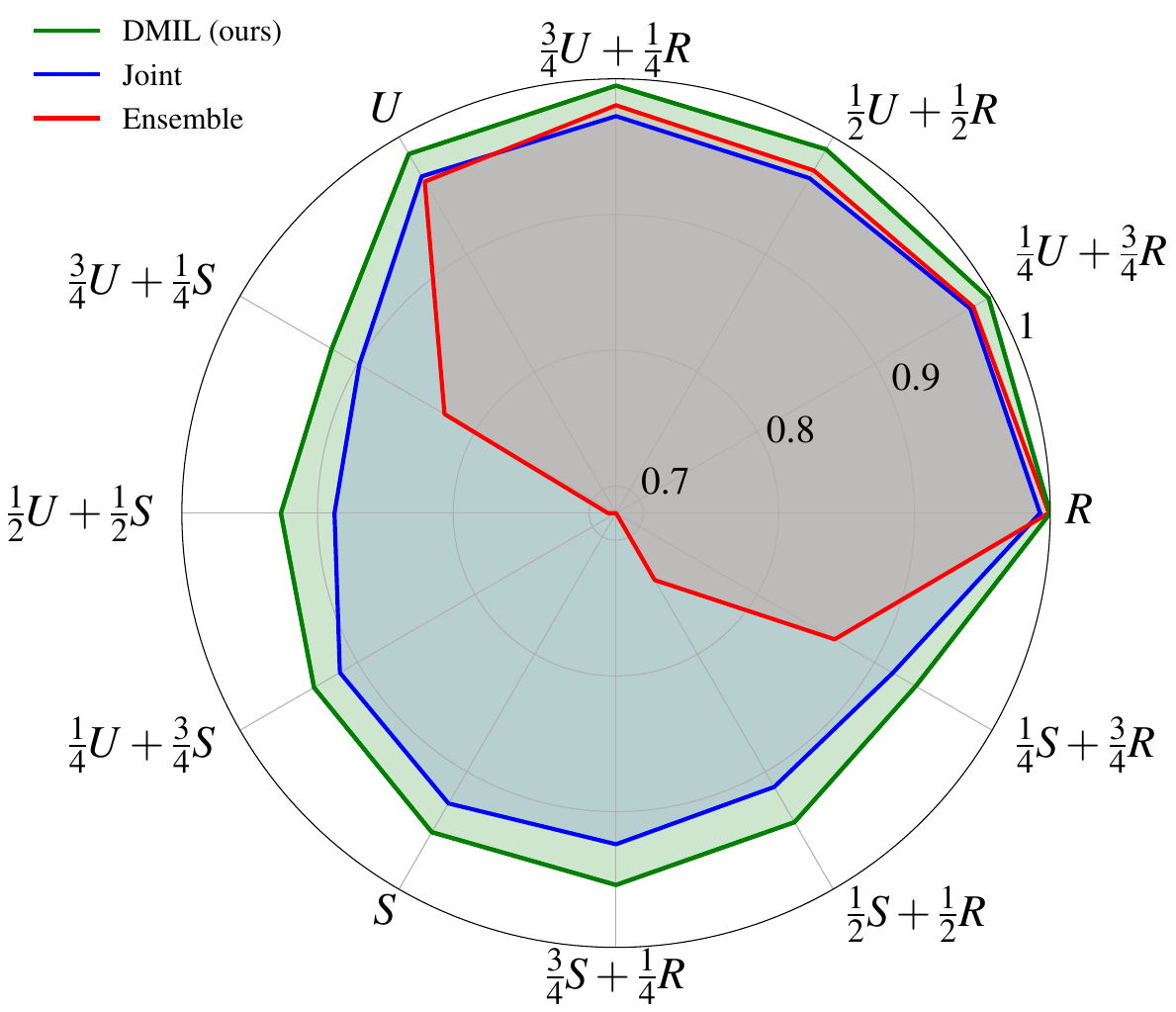}
    \caption{Accuracy comparison of multimodal learning paradigms under varying interaction compositions. The axes denote mixtures of \textbf{U}nique (U), \textbf{S}ynergistic (S), and \textbf{R}edundant (R) samples. For instance, $\frac{1}{4} U + \frac{3}{4} R$ is a 25\% \textbf{U}nique and 75\% \textbf{R}edundant mix.}
    \label{toy_data}
\end{wrapfigure}

Understanding multimodal interactions begins with their definition and quantification. Prior research has leveraged the information decomposition approach to explicitly quantify multimodal interactions~\cite{liang2023quantifying}. This quantification has been instrumental in improving multimodal learning quality, for instance, by guiding model selection~\cite{liang2023multimodal} and partitioning training data~\cite {yang2025Efficient}. Beyond quantification, interactions have also guided specialized training strategies that enhance multimodal learning, allowing samples to fully exploit different interactions~\cite{dufumier2024align, kontras2024multimodal, yang2024Quantifying}.
Despite these advancements, the sample-specific dynamics of multimodal interactions and their impact on model performance remain largely unanalyzed. This gap motivates a fundamental question: \textit{How do these data-intrinsic interactions influence the performance of multimodal learning paradigms?}



To address this question, we introduce an information-theoretic perspective to analyze sample-specific interaction dynamics by modeling their composition. From this perspective, we derive a theoretical bound demonstrating that effective multimodal learning necessitates capturing the full spectrum of interactions: redundancy, uniqueness, and synergy. Guided by this principle, we systematically analyze conventional multimodal paradigms (including joint learning and modality ensembles), revealing their inherent limitations in addressing diverse interaction compositions. 
Experiments using samples with different interaction compositions demonstrate our findings: the performance of modality ensemble degrades significantly under high synergy (the bottom-left region with \textbf{S}ynergy), whereas joint learning struggles to sufficiently capture redundant interactions (e.g., in the $\frac{1}{2}U+\frac{1}{2}R$ case). This analysis not only explains previously observed performance discrepancies~\cite{peng2022balanced,hua2024reconboost} in multimodal learning but also highlights the critical need for more adaptive interaction learning mechanisms.



To address this gap, we introduce \emph{Decomposition-based Multimodal Interaction Learning} (DMIL), a novel paradigm designed to explicitly decompose and enhance the core components of multimodal interaction. 
DMIL decomposes a multimodal representation into its redundant, unique, and synergistic components via an interaction decomposition framework based on variational inference~\cite{alemi2016deep}. Building on this decomposition, we introduce a learning strategy to ensure the information quality of the disentangled components and to enhance the learning of interactions at hand. This principled approach of decomposition and targeted enhancement enables DMIL to dynamically adapt its information processing strategy to the specific interaction of each data sample. As a result, DMIL effectively navigates diverse interaction patterns, leading to superior performance over existing paradigms, as illustrated in ~\autoref{toy_data}. Extensive experiments confirm that DMIL is a powerful and versatile paradigm, consistently outperforming state-of-the-art methods on various datasets. Our contributions are as follows:

\begin{enumerate} 
\item We employ an information-theoretic perspective to analyze how sample-specific interaction dynamics impact multimodal learning, and further reveal the inherent limitations of conventional paradigms.
\item We introduce \textit{Decomposition-based Multimodal Interaction Learning} (DMIL), a novel paradigm that explicitly decomposes multimodal interactions into their constituent components (redundancy, uniqueness, and synergy) and enables holistic interaction learning via finetuning.
\item We conduct extensive experiments across diverse datasets and architectures, demonstrating DMIL's consistent advanced performance and broad applicability.
\end{enumerate}


In summary, we establish the first information-theoretic framework for analyzing the impact of sample-specific interaction dynamics on multimodal learning and propose DMIL, a novel paradigm that explicitly decomposes and enhances diverse interactions, thereby offering valuable insights for advancing multimodal research.

\section{Related Work}
\subsection{Theoretical foundations of multimodal learning}
Despite the notable achievements of multimodal models~\cite{alayrac2022flamingo, openai2023gpt4v}, which achieves remarkable modality cooperation \cite{du2025crab}, fundamental questions remain unresolved in multimodal learning. A primary research direction examines the comparative advantages of multimodal versus unimodal approaches, with theoretical analyses demonstrating benefits in enhanced representation quality \cite{huang2021makes}, reduced learning complexity \cite{lu2023theory}, and improved computational efficiency~\cite{lu2024computational}. Concurrently, another significant research stream investigates intrinsic challenges in multimodal learning systems. This includes studies on modality competition~\cite{huang2022modality} and imbalance~\cite{peng2022balanced}, as well as examinations of data-induced learning biases~\cite{gat2020removing,zhang2024understanding} and contrastive learning dynamics with consistent data~\cite{ren2023importance}. Different from these established perspectives, our work focuses on analyzing how fundamental data interaction, specifically, the mechanisms through which information is generated from multiple modalities, influences multimodal learning performance.

\subsection{Information theory for multimodal learning}

Information theory provides a principled framework for understanding how models learn from multiple data sources~\cite{alemi2016deep, bounoua2024somegai}, guiding effective modeling of inter-modal relationships. Within this framework, some approaches seek to enhance inter-modal consistency, often leveraging the information bottleneck principle~\cite{federici2020learning, cui2024novel}, while others aim to maximize joint information by minimizing inter-modal redundancy~\cite{liang2024factorized}. The pathways through which modal information arises, including redundancy, uniqueness, and synergy, critically influence modeling quality~\cite{miao2026mibench}. Accordingly, research has concentrated on defining and quantifying these multimodal interactions using information decomposition methods~\cite{pakman2021estimating, liang2023quantifying, bertschinger2014quantifying}. Such characterizations not only provide intuitive guidance for model selection~\cite{liang2023multimodal} but have also been shown to improve multimodal learning outcomes~\cite{dufumier2024align, kontras2024multimodal}. Advancing this perspective, our work investigates the relationship between data-specific interaction dynamics and the information acquired by multimodal learning paradigms.
\section{Method}

\subsection{Preliminaries}

We consider predicting a target variable $Y$ from two source modalities, represented by the random variables $X^{(1)}$ and $X^{(2)}$. We use uppercase letters for random variables, lowercase for their realizations, and $I(\cdot;\cdot)$ to denote mutual information. From an information-theoretic perspective, the joint information that $X^{(1)}, X^{(2)}$ provide about $Y$, denoted $I(X^{(1)}, X^{(2)}; Y)$, can be decomposed into distinct components that capture their interaction \cite{liang2023quantifying, bertschinger2014quantifying,williams2010nonnegative}. Quantities of these components, which we denote with a tilde (e.g., $\tilde{R}$), are defined as follows:
\textbf{Redundancy} ($\tilde{R}$): information about $Y$ common to both $X^{(1)}$ and $X^{(2)}$.
\textbf{Uniqueness} ($\tilde{U}^{(1)}, \tilde{U}^{(2)}$): information about $Y$ unique to $X^{(1)}$ or $X^{(2)}$, respectively.
\textbf{Synergy} ($\tilde{S}$): new information about $Y$ that emerges only when considering $X^{(1)}$ and $X^{(2)}$ jointly.
These components quantify how the total multimodal information is structured, satisfying the following relations:
\begin{equation}
    \begin{aligned}
        I(X^{(1)}&, X^{(2)}; Y) = \tilde{R} + \tilde{U}^{(1)}+ \tilde{U}^{(2)}+ \tilde{S}, \\
        I(X^{(1)}; Y) &= \tilde{R} + \tilde{U}^{(1)}, \quad I(X^{(2)}; Y) = \tilde{R} + \tilde{U}^{(2)}.
    \end{aligned}
\label{interaction_relation}    
\end{equation}
In practice, given the complex distributions of raw modalities $X=(X^{(1)}, X^{(2)})$, they are often encoded into more compact representations, collectively denoted as $Z=(Z^{(1)}, Z^{(2)})$ (e.g., via encoders $\phi^{(1)}, \phi^{(2)}$), before being fed into fusion architectures. This representation $Z$ serves as a surrogate for $X$ that fully maintains the information $X$ provides about $Y$.
  
\subsection{Interaction perspective of multimodal learning}

While interaction quantities (e.g., $\tilde{R}, \tilde{U}^{(1)}, \tilde{U}^{(2)}, \tilde{S}$) describe the average amount of information across a dataset, this aggregate view conceals sample-specific variation. In practice, the underlying information structure differs greatly from one sample to the next~\cite{yan2024causality, yang2025Efficient}. For instance, some samples can be fully addressed by information from a single modality, whereas others fundamentally require the synergistic integration of multiple sources. Furthermore, as shown in \autoref{fig_interaction}, completing a task often demands a specific, organic combination of these different interaction types (i.e., redundancy, uniqueness, and synergy). This clearly demonstrates that interaction properties are not uniform; rather, they vary substantially at the individual sample level.

\begin{wrapfigure}{r}{0.5\linewidth}
    \centering
    \includegraphics[width=\linewidth]{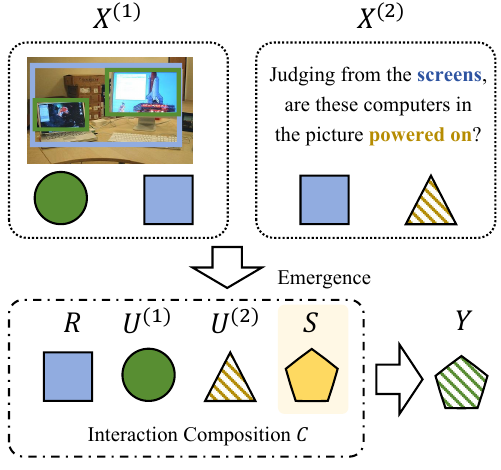}
    \caption{Visualization for Interaction Composition. Each modality contains redundant (in blue) and unique information (\textit{e.g.} green for image modality). Synergy denotes the information emerges beyond the sum of the individual modalities. Interaction composition describes the combination of information to accomplish the final task.}
    \label{fig_interaction}
\end{wrapfigure}
  
This sample-level heterogeneity in interaction presents a significant challenge for multimodal learning, as different models have inherent biases and varying capabilities for processing interactions~\cite{liang2023quantifying}. This makes it difficult to guarantee that they can capture the full spectrum of interaction information, especially when these properties change from sample to sample. Moreover, these interactions are often implicit in the data, and it remains unclear exactly how they impact overall learning performance.


We formalize these sample-specific variations by introducing the \textit{interaction composition}, represented by a random variable $C$ (see \autoref{fig_interaction}). For any given sample, $C$ defines the specific combination of redundant, unique, and synergistic information that modalities $X^{(1)}$ and $X^{(2)}$ provide about the target $Y$.
Consequently, samples with an identical $C$ are assumed to follow similar information pathways, effectively instantiating the macroscopic components from Equation \ref{interaction_relation} at the sample level. This fine-grained characterization enables analyses of model behavior under distinct interaction patterns. We proceed to demonstrate the critical importance of this concept, showing that the ability to learn comprehensively across all interaction compositions is a key determinant of high-quality multimodal learning.

\begin{theorem}
    \label{proposition_31_large} Let $c$ denote the interaction composition, and
    let $I(Z; Y)$ represent the mutual information between $Z$ and $Y$ as
    modeled by a multimodal model. The following inequality holds:
    \begin{equation}
        I(Z;Y) \geq \mathbb{E}_{c}[I(Z;Y|c)] - H(C|Z) + I(Y;C),
        \label{proposition_31}
    \end{equation}
\end{theorem}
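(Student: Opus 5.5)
The plan is to expand the joint mutual information $I(Z,C;Y)$ in two ways with the chain rule, and then bound the single leftover term by an entropy. Throughout, I treat $Z$, $Y$ and $C$ as random variables on a common probability space, with $C$ discrete. This matches its role as a label for one of finitely many interaction compositions, and it makes $H(C\mid Z)$ a well-defined, nonnegative Shannon conditional entropy.

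\textbf{Step 1 (chain rule twice).} Conditioning first on $Z$ and then on $C$ gives
\begin{equation*}
I(Z,C;Y) \;=\; I(Z;Y) + I(C;Y\mid Z) \;=\; I(C;Y) + I(Z;Y\mid C).
\end{equation*}
Rearranging yields the exact identity
\begin{equation*}
I(Z;Y) \;=\; I(Z;Y\mid C) + I(Y;C) - I(C;Y\mid Z).
\end{equation*}
By the definition of conditional mutual information, $I(Z;Y\mid C)=\sum_c p(c)\,I(Z;Y\mid C=c)=\mathbb{E}_c[I(Z;Y\mid c)]$. This is precisely the first term on the right of \eqref{proposition_31}.

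\textbf{Step 2 (bound the correction term).} It remains to show that $I(C;Y\mid Z)\le H(C\mid Z)$. Write
\begin{equation*}
I(C;Y\mid Z)=H(C\mid Z)-H(C\mid Y,Z).
\end{equation*}
Since $C$ is discrete, $H(C\mid Y,Z)\ge 0$, and the bound follows. Substituting it into the identity from Step 1 gives \eqref{proposition_31}. The same argument also exposes the slack, which is exactly $H(C\mid Y,Z)$. The inequality is therefore tight whenever the composition is determined by $(Z,Y)$.

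\textbf{Main obstacle.} The algebra is routine. The only delicate point is the measure-theoretic setup, because $Z$ may be continuous. I would handle this by using the chain rule for mutual information in its general form, which holds for arbitrary random variables whenever the quantities are finite. I would invoke discreteness only for $C$, where it is needed for $H(C\mid Y,Z)\ge 0$. If $C$ were allowed to be continuous, differential entropy could be negative and Step 2 would fail. So I would state explicitly that $C$ ranges over a countable set of compositions.
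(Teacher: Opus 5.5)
Your proof is correct. Its backbone is the same identity the paper uses. The paper writes $\Delta = I(Z;Y)-\mathbb{E}_c[I(Z;Y\mid c)] = I(Y;C)+I(Z;C)-I(Z,Y;C)$, and after the chain rule $I(Z,Y;C)=I(Z;C)+I(Y;C\mid Z)$ this is exactly your $\Delta = I(Y;C)-I(C;Y\mid Z)$.

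The routes differ only in the single inequality. The paper imposes two modelling assumptions: a deterministic encoder $Z=\phi(X)$, and a composition determined by the data, $C=f(X,Y)$. It uses them to obtain $I(Z,Y;C)\le I(X,Y;C)=H(C)$ through the data processing inequality. Your bound $I(C;Y\mid Z)\le H(C\mid Z)$ is equivalent to $I(Z,Y;C)\le H(C)$. That holds for any discrete $C$ simply because $H(C\mid Y,Z)\ge 0$, so the detour through $X$ and both assumptions are unnecessary for the theorem as stated. Under the paper's assumptions, the slack of its chain is $H(C\mid Z,Y)-H(C\mid X,Y)=H(C\mid Z,Y)$, the same gap you identify.

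What your version buys: it covers stochastic encoders and compositions that are not functions of $(X,Y)$, and it names the exact gap. What the paper's route buys: it ties the argument to the raw input $X$, which the paper then uses when interpreting the bound (the claim $H(C\mid Z)\ge H(C\mid X)$ needs $Z$ to be processed from $X$). You are also right to state discreteness of $C$ explicitly. The paper relies on it implicitly when it writes $I(X,Y;C)=H(C)$ and $I(Z;C)=H(C)-H(C\mid Z)$.
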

The proof is provided in the Appendix A. Theorem \ref{proposition_31_large} establishes a lower bound on the learned multimodal information $I(Z; Y)$, revealing two key factors for high-quality multimodal learning. First, the term $\mathbb{E}_{c}[I(Z;Y|c)]$ represents the model's average performance across all interaction compositions. Maximizing this term requires the model to learn effectively from the full spectrum of interaction dynamics. Second, the bound is tightened by minimizing the uncertainty term $H(C|Z)$, which is achieved when representation $Z$ is highly predictive of the sample's interaction composition $c$. According to the Data Processing Inequality, a representation $Z$ derived from the input $X$ cannot contain more information about the interaction composition $C$ than $X$ itself. Thus, to minimize the uncertainty $H(C|Z)$, the representation $Z$ must optimally preserve the interaction-related information inherent in the original data. In summary, achieving high $I(Z;Y)$ necessitates a model that performs effectively across diverse interaction types and learns representations that explicitly encode the specific interaction composition of each data.

\begin{figure}[t]
    \centering
    \includegraphics[width=\linewidth]{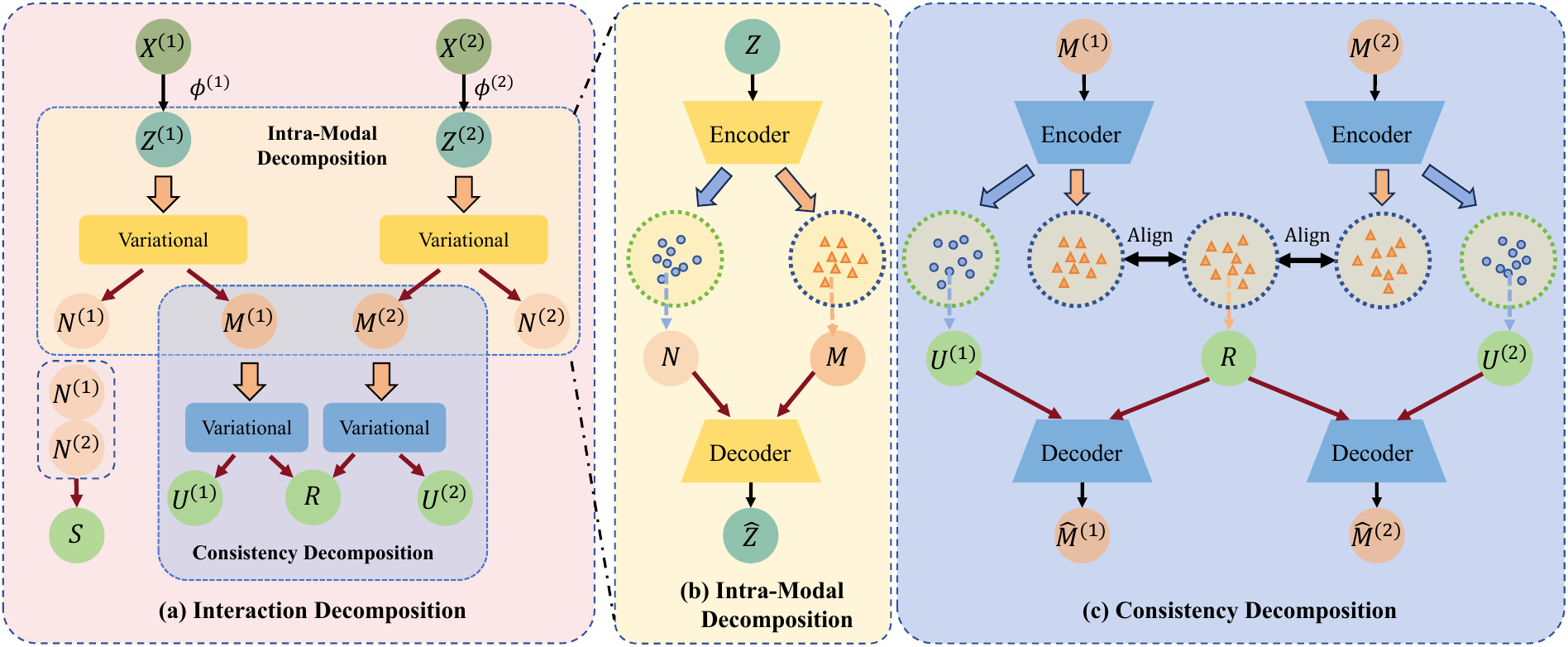}
    \caption{(a) Overall illustration of our proposed \emph{Interaction Decomposition Module}. (b) Intra-Modality Decomposition applies a variation-based
    decomposition to extract the intra-modality information for each modality. (c)
    Consistency Decomposition applies to intra-modality variables and separates consistent
    interactions $R$ from specific ones $U^{(1)}, U^{(2)}$. }
    \label{fig:illu_decom_sim}
\end{figure}

A critical question in multimodal learning is whether existing paradigms can fully capture the rich interaction dynamics inherent in the data. Currently, most paradigms handle these interaction dynamics implicitly. Shaped by distinct inductive biases, they possess varying capabilities for capturing these interactions~\cite{liang2023quantifying}. This reliance on implicit modeling leads to a significant interaction deficit, where models favor certain interaction types over others and fail to leverage the full potential of multimodal data. To illustrate this deficit, we analyze the limitations of two canonical paradigms: joint learning and modality ensemble.
  
\begin{wrapfigure}{r}{0.5\linewidth}
    \centering
    \includegraphics[width=\linewidth]{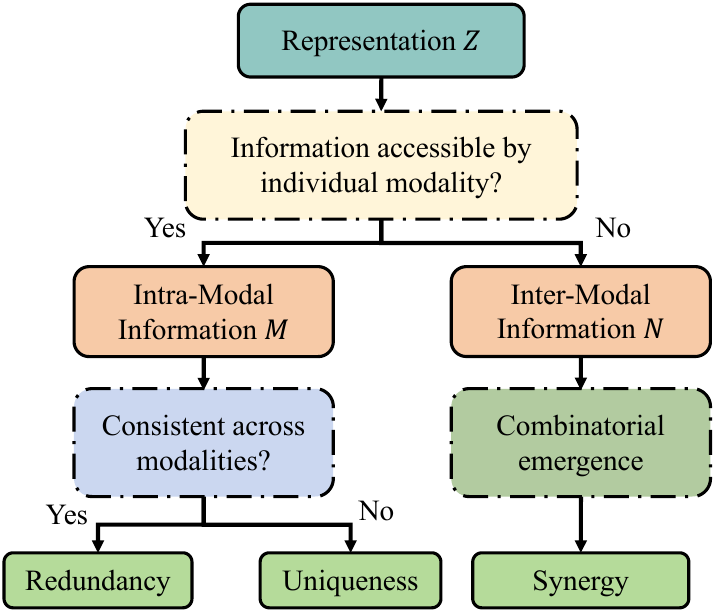}
    \caption{Flowchart for interaction decomposition.}
    \label{fig:proc_for_decomposition}
\end{wrapfigure}

\paragraph{Joint Learning} These methods project modalities into a shared latent space for joint prediction. However, this approach is susceptible to modality competition~\cite{huang2022modality} and imbalance~\cite{peng2022balanced,wu2022characterizing}. In highly redundant scenarios, where multiple modalities provide sufficient task-relevant information, the learning process can be dominated by the most salient or easily-learned modality. This dominance suppresses contributions from other modalities, hindering the model's ability to fully leverage redundant information and degrading overall performance. As illustrated in \autoref{toy_data}, joint learning underperforms other paradigms in the redundancy-rich ($\frac{1}{2}U+\frac{1}{2}R$) setting.

\paragraph{Modality Ensemble} This paradigm involves training unimodal models independently and fusing their predictions at the decision level. While this approach excels at preserving unimodal information and is thus adept at capturing redundancy, it is inherently limited in modeling cross-modal dependencies~\cite{hua2024reconboost}. By design, these models process modalities in isolation, preventing them from learning the synergistic information that emerges only from their joint interaction. Consequently, as shown in \autoref{toy_data}, the performance of modality ensembles degrades sharply in synergy-dominant scenarios, falling significantly behind other paradigms.


In summary, while conventional multimodal learning paradigms possess the capability for interaction modeling, they struggle to learn effectively from diverse interaction compositions. To address this interaction deficit, we propose a novel paradigm designed to first explicitly decompose multimodal information into its constituent interaction components and then enhance them through targeted learning.


\subsection{Learning from multimodal interaction}

We decompose multimodal information into three complementary interaction types, where are redundancy, uniqueness, and synergy, and learn them in a targeted manner. As illustrated in \autoref{fig:proc_for_decomposition}, our framework follows two successive decomposition steps. First, for each modality, we separate its representation into an \emph{intra-modality component}, which is directly predictive of the target, and an \emph{inter-modality component}, which is not independently predictive but serves as the basis for synergistic interactions across modalities. Second, we further decompose the intra-modality components into a shared redundant part and modality-specific unique parts. Based on these decomposed factors, we adopt a three-stage optimization strategy to stabilize training and to ensure that each interaction type is learned appropriately.

Given the unimodal representation $Z^{(m)}$, \emph{intra-modality decomposition} module factorizes it into an intra-modality component $M^{(m)}$ and an inter-modality component $N^{(m)}$:
\begin{equation}
    \begin{aligned}
        \max \quad &I(Z^{(m)}; M^{(m)}, N^{(m)}) + I(M^{(m)}; Y) \\
        - &I(Z^{(m)}; M^{(m)}) - I(Z^{(m)}; N^{(m)}), \quad m \in [2].
    \end{aligned}
    \label{decomp_1}
\end{equation}

This objective drives $M^{(m)}$ to preserve task-relevant information via $I(M^{(m)}; Y)$ while disentangling it from the residual factor $N^{(m)}$. Consequently, $M^{(m)}$ contains directly predictive information, whereas  is insufficient for prediction in isolation but can contribute through cross-modal interaction. We use $N^{(m)}$ to construct the synergy component in later stages. We then perform \emph{consistency decomposition} on $M^{(1)}$ and $M^{(2)}$ to separate shared and modality-specific predictive information:
\begin{equation}
    \begin{aligned}
        \max~ 2 I(M^{(1)}; M^{(2)}; R) - I(U^{(1)}; R) - I(U^{(2)}; R).
    \end{aligned}
    \label{decomp_2}
\end{equation}
Here, $R$ denotes the redundant component shared across modalities, while $U^{(1)}$ and $U^{(2)}$ denote the unique components specific to each modality. Maximizing the tripartite information term encourages $R$ to capture the information consistently present in both modalities, whereas minimizing $I(U^{(m)}; R)$ promotes separation between shared and modality-specific factors. Consequently, $R$ models cross-modal consensus, and $U^{(m)}$ preserves the predictive information that cannot be explained by the shared component.

To optimize these decomposed representations, we use a three-stage training strategy. In Stage 1, we train the encoder and the intra-modality decomposition module to learn $M^{(m)}$ and $N^{(m)}$. In Stage 2, we freeze the Stage-1 modules to stabilize the learned representation space, and optimize the consistency decomposition module to extract $R$ and $U^{(m)}$ from $M^{(m)}$. Concurrently, complementing the decomposition of $M^{(m)}$, we construct the synergy component $S$ by fusing the residual factors $N^{(1)}$ and $N^{(2)}$ through a multi-level fusion mechanism. 
In Stage 3, all parameters are jointly fine-tuned. Specifically, each decomposed component $c \in \{R, U^{(1)}, U^{(2)}, S\}$ is first projected into the output space via a linear mapping to obtain its initial prediction $\hat{y}_c$. We then employ a gating network to dynamically predict component-specific weights $g_c$, aggregating these predictions to produce the final output:
\begin{equation}
    \hat{Y} = \sum_{c \in \{R, U^{(1)}, U^{(2)}, S\}} g_c \hat{y}_c.
\end{equation}
This final stage refines the full model end-to-end while preserving the learned decomposed interaction structure.

To empirically optimize the mutual information objectives, we define the stage-wise losses as follows. Specifically, the target loss $\mathcal{L}_{\text{t}}$ (acting as a proxy for maximizing $I(\cdot; Y)$) is applied to $M^{(m)}$, $R$, $U^{(m)}$, $S$, and the final prediction $\hat Y$ to ensure task relevance. The variational loss $\mathcal{L}_{\text{Var}}$ is utilized to approximate the minimization of mutual information terms (e.g., $I(Z^{(m)}; N^{(m)})$ and $I(U^{(m)}; R)$), encouraging disentanglement between latent factors. The alignment loss $\mathcal{L}_{\text{Align}}$ is imposed on $R$ to enforce cross-modal consistency. Accordingly, the stage-wise objectives are defined as:

\begingroup\small
\begin{equation}
\mathcal{L}_1 = \sum_{m \in [2]} \left[ \mathcal{L}_{\text{t}}(M^{(m)}) + \mathcal{L}_{\text{Var}}(M^{(m)}, N^{(m)}) \right],
\end{equation}
\begin{equation}
\mathcal{L}_2 = \sum_{c \in \{R, U, S\}} \mathcal{L}_{\text{t}}(c)
+ \sum_{m \in [2]} \mathcal{L}_{\text{Var}}(U^{(m)}, R)
+ \alpha \mathcal{L}_{\text{Align}}(R),
\end{equation}
\begin{equation}
\mathcal{L}_3 = \mathcal{L}_{\text{t}}(\hat{Y}) + \beta \mathcal{L}_1 + \gamma \mathcal{L}_2.
\end{equation}
\endgroup
In this way, synergy is explicitly learned in Stage 2 and further refined in Stage 3, while redundancy and uniqueness are learned under dedicated structural constraints. 
Additional details, including the derivation of the objectives (Appendix A.3) and the synergy component derivation (Appendix B.2), are provided in the appendices.
 
\begin{table}[t]
    \centering
    \small
    \caption{Performance comparison (Accuracy and mAP) on multiple multimodal datasets. All results are reported as mean $\pm$ standard deviation. The best results are \textbf{highlighted}, and ``--'' denotes non-applicability.}
    \setlength{\tabcolsep}{3pt}
    \renewcommand{\arraystretch}{1.13}
    \newcommand{\std}[1]{{\tiny$\pm$#1}}
    \resizebox{\linewidth}{!}{
    \begin{tabular}{l|cc|cc|cc|cc|cc}
    \toprule
    \multirow{2}{*}{Method}
        & \multicolumn{2}{c|}{CREMA-D}
        & \multicolumn{2}{c|}{KS}
        & \multicolumn{2}{c|}{UCF101}
        & \multicolumn{2}{c|}{KS (ViT)}
        & \multicolumn{2}{c}{CMU-MOSEI} \\
    \cmidrule(lr){2-3} \cmidrule(lr){4-5} \cmidrule(lr){6-7} \cmidrule(lr){8-9} \cmidrule(lr){10-11}
    & \textbf{ACC} & \textbf{mAP}
    & \textbf{ACC} & \textbf{mAP}
    & \textbf{ACC} & \textbf{mAP}
    & \textbf{ACC} & \textbf{mAP}
    & \textbf{ACC} & \textbf{mAP} \\
    \midrule
    Joint                           & 72.55\std{0.3}            & 81.12\std{0.4}      & 85.07\std{0.3}            & 90.93\std{0.1}      & 79.33\std{0.5}            & 86.66\std{0.3}      & 67.81\std{0.5}            & 73.11\std{0.8}      & 79.42\std{0.3}            & 80.76\std{0.8}      \\
    Ensemble                        & 74.87\std{0.7}            & 82.35\std{0.4}      & 85.86\std{0.4}            & 91.90\std{0.4}      & 83.63\std{0.3}            & 89.77\std{0.4}      & 71.80\std{0.5}            & 79.03\std{0.6}      & 79.18\std{0.4}            & 80.79\std{0.1}      \\
    \midrule
    OGM~\cite{peng2022balanced}     & 72.95\std{0.3}            & 80.67\std{0.4}      & 85.17\std{0.4}            & 90.78\std{0.4}      & 79.94\std{0.4}            & 86.32\std{0.4}      & 68.02\std{0.1}            & 73.42\std{0.5}      & 79.57\std{0.3}            & 80.07\std{0.7}      \\
    PMR~\cite{fan2023pmr}           & 72.45\std{0.6}            & 81.14\std{0.5}      & 86.03\std{0.3}            & 91.51\std{0.2}      & 79.18\std{0.5}            & 85.53\std{0.5}      & 67.99\std{0.3}            & 73.36\std{1.1}      & 80.34\std{0.1}            & 81.74\std{0.7}      \\
    AGM~\cite{kontras2024improving} & 73.51\std{0.5}            & 82.17\std{0.4}      & 85.41\std{0.5}            & 91.40\std{0.4}      & 79.52\std{0.4}            & 86.26\std{0.4}      & 68.51\std{0.1}            & 75.43\std{0.4}      & 79.40\std{0.3}            & 80.54\std{0.5}      \\
    \midrule
    MMTM~\cite{vaezi20mmtm}         & 73.44\std{0.5}            & 82.08\std{0.4}      & 84.10\std{0.4}            & 90.43\std{0.5}      & 79.40\std{0.3}            & 85.57\std{0.6}      & --                         & --                   & --                         & --                   \\
    MMIB~\cite{mai2022multimodal}   & 74.59\std{0.5}            & 83.23\std{0.4}      & 85.86\std{0.4}            & 91.75\std{0.2}      & 81.52\std{0.4}            & 88.09\std{0.3}      & 67.57\std{0.6}            & 73.52\std{0.3}      & 79.73\std{0.1}            & 81.65\std{0.3}      \\
    QMF~\cite{zhang2023provable}    & 73.32\std{0.4}            & 81.16\std{0.5}      & 85.82\std{0.4}            & 91.15\std{0.5}      & 80.63\std{0.4}            & 85.67\std{0.5}      & 68.10\std{0.9}            & 72.90\std{1.5}      & 79.44\std{0.2}            & 80.91\std{0.7}      \\
    FCL~\cite{liang2024factorized}  & 74.73\std{0.6}            & 83.08\std{0.6}      & 85.81\std{0.4}            & 91.98\std{0.4}      & 81.47\std{0.4}            & 88.31\std{0.3}      & 68.90\std{0.6}            & 75.38\std{1.0}      & 79.81\std{0.2}            & 81.68\std{0.4}      \\
    MLB~\cite{kontras2024improving} & 75.94\std{0.4}            & 83.83\std{0.2}      & 85.52\std{0.3}            & 91.74\std{0.2}      & 83.25\std{0.2}            & 89.53\std{0.1}      & 71.22\std{0.8}            & 77.29\std{0.7}      & 79.56\std{0.2}            & 80.71\std{0.6}      \\
    MMML~\cite{wu2024multimodal}    & 74.46\std{0.4}            & 83.03\std{0.5}      & 85.92\std{0.4}            & 91.90\std{0.3}      & 82.23\std{0.5}            & 88.50\std{0.5}      & 69.14\std{0.2}            & 75.82\std{0.3}      & 79.87\std{0.2}            & 81.04\std{0.2}      \\
    MCR~\cite{kontras2024multimodal} & 76.34\std{0.5}           & 84.59\std{0.6}      & 86.41\std{0.6}            & 92.07\std{0.4}      & 82.81\std{0.5}            & 88.93\std{0.3}      & 69.51\std{0.6}            & 76.33\std{0.8}      & 80.77\std{0.3}            & 81.80\std{0.5}      \\
    I2M2~\cite{madaan2024jointly}   & 74.13\std{0.2}            & 82.90\std{0.4}      & 85.07\std{0.3}            & 91.11\std{0.3}      & 80.02\std{0.7}            & 86.24\std{0.6}      & 68.93\std{0.9}            & 73.55\std{0.7}      & 79.85\std{0.2}            & 81.25\std{0.3}      \\
    \midrule
    \textbf{DMIL}                   & \textbf{77.02\std{0.3}}  & \textbf{85.13\std{0.4}} & \textbf{86.72\std{0.3}}  & \textbf{92.15\std{0.2}} & \textbf{85.01\std{0.3}}  & \textbf{90.53\std{0.4}} & \textbf{74.20\std{0.4}}  & \textbf{81.83\std{0.5}} & \textbf{81.12\std{0.4}}  & \textbf{82.17\std{0.8}} \\
    \bottomrule
    \end{tabular}
    }
    \label{main_exp_cnn}
\end{table}

\section{Experiment}

This section presents the primary experimental results. A detailed account of the experimental setup and comprehensive analyses (e.g. backbone comparisons, data dynamics, and interaction weight analysis) are provided in Appendix B.

\subsection{Experiment setting}
\textbf{Datasets}: 
\textbf{CREMA-D} \cite{cao2014crema}: An audio-visual emotion recognition dataset containing 7,442 clips across six emotion categories.
\textbf{Kinetic-Sounds} (KS)~\cite{arandjelovic2017look}: A multimodal action recognition dataset featuring 19,000 clips covering 31 human action classes, using both audio and visual data.
\textbf{UCF101} \cite{soomro2012ucf101}: An action recognition dataset with 13,320 videos across 101 classes, utilizing RGB and optical flow modalities. (Additional synthetic experiments are detailed in the Appendix). \textbf{CMU-MOSEI} \cite{zadeh2018multimodal}: A large-scale sentiment analysis dataset with 23,453 segments. We focus on the visual and textual modalities for a two-way classification task (positive, negative). Details about real-world and synthetic datasets are provided in the Appendix B. 
  
\textbf{Experimental detail}:
We report the mean and standard deviation over 5 runs for all results. Inputs from two modalities are first processed by their respective encoders to extract features, and are then fed into the fusion module. In the CNN-based experiments, we use ResNet18~\cite{he2016deep} as the backbone. In the Transformer-based experiments, for the CMU-MOSEI dataset, we follow the preprocessing steps and apply Transformers to encode different modalities in line with~\cite{liang2021multibench}. For the KS and VGGSound datasets, we also utilize Vision Transformer (ViT) to encode image and audio. 

\textbf{Method realization}:
In our proposed DMIL, interaction feature are projected into the output space via linear mapping. We employ a gating network to dynamically aggregate these projected components based on their contributions.  Furthermore, we introduce a novel fusion paradigm for the $S$ynergy component, specifically designed to capture complex synergistic interactions (e.g., XOR) by transforming them into a linearly separable representation. Further experimental details and explanation are provided in the Appendix B.


\subsection{Comparison on real-world datasets}
\label{real-world-dataset}
To comprehensively evaluate our proposed \textit{Decomposition-based Multimodal Interaction Learning} (DMIL) framework, we conduct extensive comparisons with multimodal approaches across diverse real-world datasets. We categorize the baseline methods into three groups for systematic analysis: (1) \textbf{Conventional Baselines}, including conventional multimodal paradigms joint training and modality ensemble; (2) \textbf{Regularization-based Methods} that improve multimodal learning by constraining unimodal representations (OGM~\cite{peng2022balanced}, PMR~\cite{fan2023pmr}, and AGM~\cite{kontras2024improving}); and (3) \textbf{Interaction-focused Methods} explicitly designed to model cross-modal relationships (MMTM~\cite{vaezi20mmtm}, MMIB~\cite{mai2022multimodal}, QMF~\cite{zhang2023provable}, FCL~\cite{liang2024factorized}, MLB~\cite{kontras2024improving}, MMML~\cite{wu2024multimodal}, MCR ~\cite{kontras2024multimodal}, and I2M2~\cite{madaan2024jointly}). The comprehensive results are presented in Table~\ref{main_exp_cnn}. `-' in the table indicates cases where a method is not applicable to a specific dataset.

\begin{wrapfigure}{r}{0.5\linewidth}
    \centering
    \includegraphics[width=\linewidth]{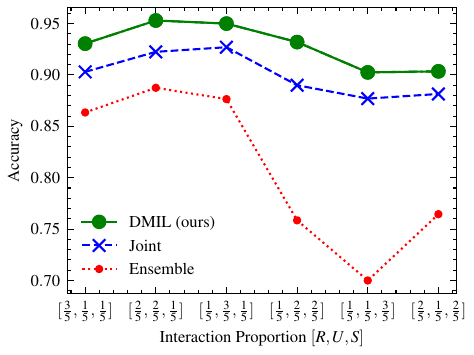}
    \caption{Performance comparison with varying compositions of Redundancy, Uniqueness, and Synergy interactions.}
    \label{toy_data_3}
\end{wrapfigure}
  
From these empirical results, we draw several key observations. First, joint modeling demonstrates superior performance over ensemble learning only in specific scenarios, depending on both the dataset characteristics and model architecture. Second, while both regularization methods and architecture-based interaction learning approaches generally outperform joint modeling in enhancing multimodal learning, their effectiveness varies across different experimental settings. For instance, methods like MLB excel on certain datasets, such as UCF101, but show limited gains on others, highlighting the lack of a universally optimal fusion strategy among prior works. Notably, on the Kinetic-Sounds dataset, the ensemble method surpasses most competing approaches, likely due to the dataset's inherent propensity for exhibiting more readily learnable interactions. Third, our DMIL framework consistently achieves state-of-the-art performance across both backbone architectures. This superior performance stems from our method's ability to effectively decompose and strengthen the learning from diverse multimodal interactions. Unlike methods that implicitly handle these interactions, DMIL's explicit disentanglement allows for a more targeted and robust fusion, leading to consistent improvements. Finally, the effectiveness of our approach is further validated by its maintained performance advantage when applied to Transformer-based architectures and text-related tasks, underscoring the method's generalizability.

\subsection{Validation for interaction learning}
  \begin{wraptable}{r}{0.5\linewidth}
    \centering
    \small
    \vspace{-1em}
    \caption{Synergy validation on VQAv2 and CLEVR datasets.}
    \renewcommand{\arraystretch}{0.9}
    \setlength{\tabcolsep}{2.5pt}
    \begin{tabular}{lccccccc}
        \toprule
        \multirow{2}{*}{Method} & \multicolumn{3}{c}{VQAv2} & & \multicolumn{3}{c}{CLEVR} \\
        \cmidrule(lr){2-4} \cmidrule(lr){6-8}
         & ACC & mAP & Syn. & & ACC & mAP & Syn. \\
        \midrule
        Ensemble & 58.03 & 59.58 & 0.00 & & 58.35 & 62.35 & 0.00 \\
        Joint    & 67.69 & 74.03 & 9.48 & & 62.65 & 69.90 & 7.53 \\
        \textbf{DMIL} & \textbf{70.08} & \textbf{77.54} & \textbf{11.44} & & \textbf{63.08} & \textbf{71.18} & \textbf{9.69} \\
        \bottomrule
    \end{tabular}
    \label{synergy_dataset}
\end{wraptable}
\paragraph{Interaction learning performance.}
\label{radar_text}
Our analysis and method focus on learning from different types of interactions, which are difficult to distinguish and measure directly from the real-world data. To address this, we construct synthetic datasets in which the intrinsic interactions within the data can be manually configured. In this setup, each sample contains a specific type of data—Redundancy ($R$), Uniqueness ($U$), or Synergy ($S$)—based on how they relate to the target. For redundancy data, both modalities map to the target, whereas for uniqueness data, only one modality is predictive. In synergy data, each modality contributes partial but insufficient information (similar to the XOR condition). We evaluate DMIL against standard paradigms—Joint Learning, Modality Ensemble —on datasets with varying mixtures of R, U, and S samples (\autoref{toy_data} and \ref{toy_data_3}). The results confirm our analysis that joint training shows significant improvement in synergy-related data, while unimodal ensemble performs better with redundancy types of interactions. In contrast, DMIL achieves substantial performance gains across all scenarios, demonstrating the advantage of its targeted approach to learning from diverse interactions.




\paragraph{Synergy Experiment.} Synergistic interaction is a fundamental yet challenging aspect of multimodal learning, as it represents essential information that only emerges through the joint reasoning of multiple modalities. To explicitly verify whether our method can capture these complex cross-modal dynamics, we evaluate it on two synergy-dominated datasets~\cite{liang2023quantifying}, VQAv2 and CLEVR. Following~\cite{kontras2024multimodal}, we employ the \textit{Synergy} metric, which quantifies the proportion of samples correctly predicted by the multimodal model while simultaneously misclassified by both unimodal counterparts:
{\small
\begin{equation}
    Synergy = \frac{1}{N} \sum_{i=1}^{N} \mathbb{I}(\hat y_i = y_i \wedge \hat y_i^{(1)} \neq y_i \wedge \hat y_i^{(2)} \neq y_i),
\end{equation}
}
where $\mathbb{I}(\cdot)$ denotes the indicator function, $y_i$ is the ground-truth label, and $\hat y_i$ and $\hat y_i^{(m)}$ denote the predictions of the multimodal model and the unimodal models, respectively. As reported in \autoref{synergy_dataset}, our proposed DMIL consistently outperforms standard baselines and obtains the highest Synergy scores. Compared with Ensemble, which only combines unimodal outputs, and Joint, which performs standard multimodal training, the consistent gains of DMIL indicate that our method is more effective as it can better capturing synergy interaction. These results suggest that the performance improvement of DMIL comes from its stronger ability to learn from complex modality interactions.

\begin{wrapfigure}{r}{0.5\linewidth}
    \centering
    \vspace{-1em}
    \includegraphics[width=\linewidth]{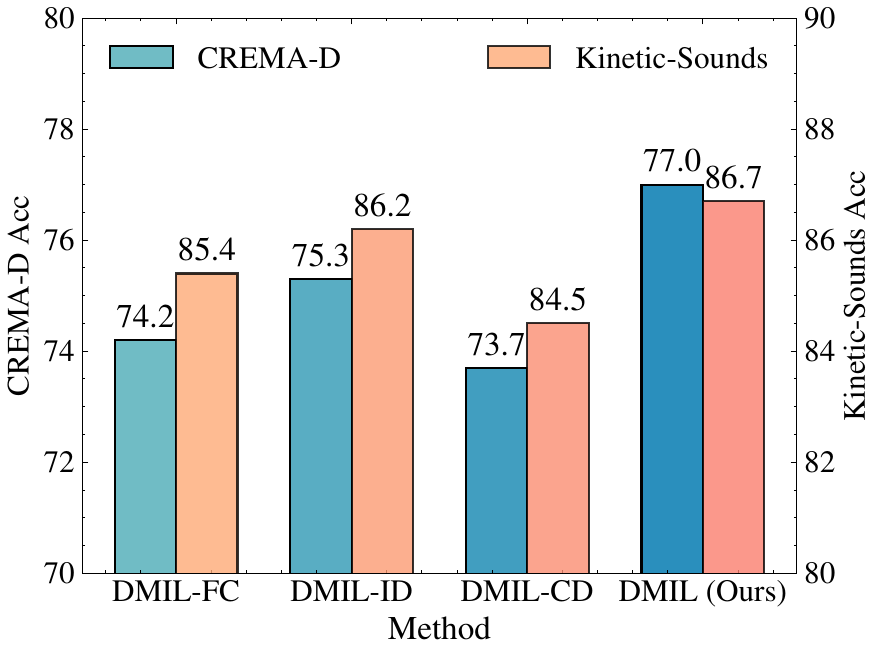}
    \caption{Ablation studies of DMIL paradigm on CREMA-D and Kinetic-Sounds (KS) datasets.}
    \label{ablation}
    \vspace{-1em}
\end{wrapfigure}

\paragraph{Ablation study}
In this section, we conduct an ablation study to validate the core components of our interaction decomposition framework. The study addresses two key questions: \textit{1) Is the variational approach crucial for decomposition? 2) Are both the Intra-modality (ID) and Consistency Decomposition (CD) modules necessary for optimal performance?} To answer these, we evaluate three ablated variants against the full DMIL model, with results shown in \autoref{ablation}. First, \textit{DMIL-FC} replaces the variational layers with fully connected layers to test the variational approach. Second, to isolate the contribution of each decomposition module, we create \textit{DMIL-ID} (which retains only the ID module) and \textit{DMIL-CD} (which retains only the CD module).

The results in \autoref{ablation} confirm our design choices, as the full DMIL model substantially outperforms all ablated versions. The performance drop from DMIL to \textit{DMIL-FC} underscores the importance of the variational method for effectively decoupling interactions. Furthermore, the lower performance of \textit{DMIL-ID} and \textit{DMIL-CD} compared to the full model demonstrates that both decomposition modules are essential. These findings collectively validate that the variational approach and the dual-module (ID and CD) architecture are integral to DMIL's effectiveness.

\begin{figure}[t]
    \centering
    \includegraphics[width=0.85\linewidth]{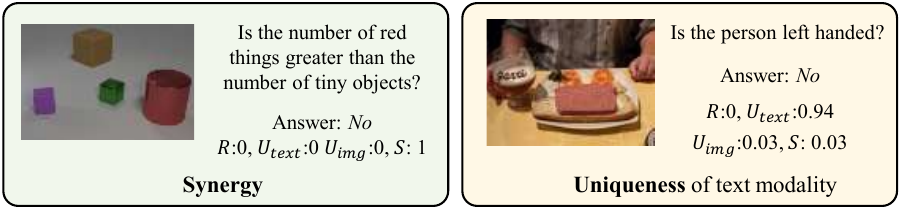}
    \caption{Case study: Synergy vs. Textual Uniqueness.}
    \label{case_study}
\end{figure}

\subsection{Interaction under different scenarios}
\paragraph{Interaction in three modalities. } Extending multimodal methods beyond two modalities, a common requirement for real-world scenarios, presents significant challenges. We examine this extension from both analytical and experimental perspectives. Analytically, interactions among more than two modalities introduce substantial complexity. For instance, defining mutual information across three or more variables to characterize concepts like redundancy, uniqueness, and synergy remains a significant theoretical challenge~\cite{liang2023quantifying}.

Experimentally, however, our proposed Decomposition-based Multimodal Interaction Learning (DMIL) approach can be adapted for scenarios involving three modalities. Specifically, our paradigm can accommodate three modalities by incorporating Intra-modality Decomposition (DMIL-ID), as illustrated in \autoref{ablation}. We conducted empirical evaluations on two three-modality datasets: CMU-MOSEI (Visual, Audio, and Text) and UCF101 (RGB, Optical Flow, and Frame Difference). As shown in \autoref{three_mod}, the experimental results demonstrate the improvements from our DMIL method, highlighting its flexibility and effectiveness.

\paragraph{Case Study.} \autoref{case_study} further provides sample-level evidence for the effectiveness of our decomposition framework. In the left example, neither the image nor the text alone is sufficient to answer the question, and our method assigns a Synergy score of 1, which is fully consistent with the definition of synergistic interaction. In the right example, the prediction is primarily supported by textual prior knowledge (right-handedness is relatively common), and our method correspondingly assigns a larger contribution to text uniqueness. These two cases show that our framework can distinguish synergy-dominant samples from unimodal-dominant ones, which supports both its dataset-level performance gains and its interpretable, fine-grained decomposition of sample-level multimodal interactions.




\begin{table}[t]
    \centering
    \small
    \begin{minipage}[t]{0.48\linewidth}
        \centering
        \caption{Validation on trimodal setting.}
        \label{three_mod}
        \setlength{\tabcolsep}{4pt}
        \begin{tabular}{l|cc|cc}
            \toprule
            \multirow{2}{*}{Method} & \multicolumn{2}{c|}{MOSEI} & \multicolumn{2}{c}{UCF} \\
            \cmidrule(lr){2-3} \cmidrule(lr){4-5}
            & ACC & mAP & ACC & mAP \\
            \midrule
            Joint    & 80.31 & 81.43 & 79.25 & 86.57 \\
            Ensemble & 79.78 & 81.29 & 84.76 & 90.07 \\
            \textbf{DMIL} & \textbf{82.13} & \textbf{82.95} & \textbf{85.81} & \textbf{91.24} \\
            \bottomrule
        \end{tabular}
    \end{minipage}
    \hfill
    \begin{minipage}[t]{0.48\linewidth}
        \centering
        \caption{OOD validation on KS and CREMA-D.}
        \label{ood_table}
        \setlength{\tabcolsep}{4pt}
        \begin{tabular}{l|cc|cc}
            \toprule
            \multirow{2}{*}{Method} & \multicolumn{2}{c|}{KS} & \multicolumn{2}{c}{CREMA-D} \\
            \cmidrule(lr){2-3} \cmidrule(lr){4-5}
            & ID & OOD & ID & OOD \\
            \midrule
            Joint    & 85.54 & 52.95 & 90.19 & 67.11 \\
            Ensemble & 90.96 & 53.65 & 91.01 & 71.88 \\
            \textbf{DMIL} & \textbf{91.71} & \textbf{56.04} & \textbf{91.28} & \textbf{72.53} \\
            \bottomrule
        \end{tabular}
    \end{minipage}
\end{table}
\paragraph{OOD generalization.} 

To evaluate the generalization and scalability of DMIL, we conducted an Out-of-Distribution experiment. We partitioned the classes from the KS and CREMA-D datasets into two disjoint sets: In-Distribution (ID) classes used for training and Out-of-Distribution (OOD) classes held out for testing. The model was trained exclusively on the ID set, and we then evaluated its ability to distinguish the representations of the unseen OOD classes. As shown in \autoref{ood_table}, DMIL performs well on both ID and OOD sets. This suggests that by explicitly learning multimodal interactions, our method avoids overfitting to training-specific correlations and achieves strong generalization, thus highlighting the scalability of our approach.

\section{Conclusion}
We introduce an information-theoretic framework that highlights the importance of learning from different interaction compositions. Additionally, we analyze that previous multimodal learning paradigms are not able to learn from various interaction compositions. Based on this analysis, we propose a \textit{Decomposition-based Multimodal Interaction Learning} (DMIL) paradigm that effectively distinguishes and strengthens interactions within the data, and we design a three-stage learning process to achieve improved performance.

\textbf{Future Work}: We identify two primary directions for future work. First, investigating how to effectively capture and interpret interaction mechanisms within MLLMs across a broader and more complex range of task scenarios. Second, scaling the modeling of interactions to settings that involve a larger number of modalities. This direction will require developing efficient methods to manage the combinatorial complexity introduced by higher-order interactions.

\section*{Acknowledgments}

This work is supported by Beijing Natural Science Foundation (4262050), the Public Computing Cloud, Renmin University of China and the fund for building world-class universities (disciplines) of Renmin University of China.

\clearpage

\bibliographystyle{IEEEtran}
\bibliography{paper}

\clearpage

\beginappendix

In the Supplementary Material, we first present the proofs and detailed analysis for the information-based interaction bound and the decomposition architecture in \autoref{proof_analysis}. Specifically, we provide the theoretical proof for Theorem 1 to clarify the significance of interaction \autoref{proof_for_pro31}. And we provide the variational explanation for the interaction decomposition architecture \autoref{explanation_for_decom}. Next, we describe the experimental setup and model architecture in greater detail \autoref{experimental_setting}. Finally, we conduct expanded evaluations across diverse domains with varying modalities and backbones to further validate the applicability of our DMIL method \autoref{expanded_evaluation}.
\section{Proof and Analysis}
\label{proof_analysis}
\subsection{Proof for Theorem 1}
\label{proof_for_pro31}

\begin{theorem}
    Let \(C\) be the interaction composition random variable with realization \(c\), \(Z\) be the learned representation from a multimodal input \(X\), and \(Y\) be the target variable. The mutual information \(I(Z;Y)\) is lower-bounded as follows:
    \begin{equation}
        I(Z;Y) \geq \mathbb{E}_{c}[I(Z;Y|c)] - H(C|Z) + I(Y;C),
        \label{prop_31_appendix}
    \end{equation}
    where \(H(C|Z)\) is the conditional entropy of \(C\) given \(Z\).
    \end{theorem}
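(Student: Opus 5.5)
The plan is to expand the joint mutual information $I(Z,C;Y)$ with the chain rule in two different orders, equate the two expansions, and then bound the one leftover term by a conditional entropy. Throughout, the expectation $\mathbb{E}_{c}[I(Z;Y|c)]$ is read as the conditional mutual information $I(Z;Y|C)=\sum_c p(c)\,I(Z;Y|C=c)$. I will treat $C$ as a discrete random variable, so that $H(C|Z)$ is a well-defined, nonnegative Shannon entropy. This matches its role as a label for a finite set of interaction compositions.

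\textbf{Step 1: two expansions.} By the chain rule for mutual information,
\begin{equation*}
I(Z,C;Y) = I(C;Y) + I(Z;Y|C) = I(Z;Y) + I(C;Y|Z).
\end{equation*}
Rearranging gives the exact identity
\begin{equation*}
I(Z;Y) = \mathbb{E}_{c}[I(Z;Y|c)] + I(Y;C) - I(C;Y|Z).
\end{equation*}

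\textbf{Step 2: bound the residual term.} It remains to show $I(C;Y|Z)\le H(C|Z)$. Write $I(C;Y|Z)=H(C|Z)-H(C|Y,Z)$ and use $H(C|Y,Z)\ge 0$, which holds because $C$ is discrete. Substituting this bound into the identity of Step 1 yields the inequality of Theorem~\ref{proposition_31_large}. Equality holds exactly when $C$ is a deterministic function of $(Y,Z)$.

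\textbf{Main obstacle.} The argument itself is short. The only delicate point is Step 2, which relies on the nonnegativity of conditional entropy. For a continuous $C$, differential entropy can be negative and the bound could fail. I would therefore state the discreteness of $C$ explicitly as an assumption. This is natural here, since compositions such as R, U, and S form a finite set. The only other ingredient is the notational identification of $\mathbb{E}_c[I(Z;Y|c)]$ with $I(Z;Y|C)$.
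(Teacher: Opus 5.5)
Your proof is correct, and its first step coincides with the paper's. The paper also rewrites $I(Z;Y)-\mathbb{E}_c[I(Z;Y|c)]$ via the chain rule, as $I(Y;C)+I(Z;C)-I(Z,Y;C)$; this is your identity, since $I(C;Y|Z)=I(Z,Y;C)-I(Z;C)$.

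The routes diverge only in how the residual is controlled. The paper imposes two modeling assumptions: a deterministic encoder $Z=\phi(X)$ and a deterministic composition $C=f(X,Y)$. It uses them to get $I(Z,Y;C)\le I(X,Y;C)=H(C)$ via the data processing inequality. You instead use $I(C;Y|Z)=H(C|Z)-H(C|Y,Z)\le H(C|Z)$. These are the same inequality: subtract $I(Z;C)$ from both sides of $I(Z,Y;C)\le H(C)$.

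Your version shows that $I(Z,Y;C)\le H(C)$ holds for any discrete $C$. So neither of the paper's assumptions is needed for the bound, which therefore also holds for stochastic encoders and for compositions that are not functions of $(X,Y)$. Your explicit discreteness hypothesis is also the honest one. The paper relies on it implicitly, since $H(C|X,Y)=0$ for a deterministic $C$ only makes sense for Shannon entropy.

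What the paper's detour buys is interpretive. The slack $H(C|Y,Z)$ splits as $\bigl[I(X,Y;C)-I(Z,Y;C)\bigr]+H(C|X,Y)$. Under the paper's assumptions the second term vanishes, so the gap is exactly the information about $C$ lost when encoding $X$ into $Z$. This matches the paper's narrative that $Z$ should preserve the interaction information present in $X$. Your equality condition, that $C$ is a function of $(Y,Z)$, expresses the same fact without reference to $X$.
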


    \begin{proof}
    The proof aims to establish a lower bound for \(I(Z;Y)\) by analyzing the difference between the total mutual information and the expected conditional mutual information, a quantity we denote as \(\Delta\). This term, \(\Delta = I(Z;Y) - \mathbb{E}_{c}[I(Z;Y|c)]\), quantifies the information synergy; that is, how much information about \(Y\) is revealed in \(Z\) due to knowledge of the interaction context \(C\).

    Our derivation rests on two key assumptions regarding the data generation process:
    \begin{enumerate}
        \item \textbf{Deterministic Encoding}: The representation \(Z\) is generated by a deterministic encoder \(\phi\) from the multimodal input \(X\), i.e., \(Z = \phi(X)\). This implies that given \(X\), there is no uncertainty about \(Z\), so the conditional entropy \(H(Z|X) = 0\). This establishes the Markov chain \(Y \to X \to Z\).
        \item \textbf{Deterministic Interaction}: The interaction variable \(C\) is a deterministic function of the input \(X\) and the target \(Y\), i.e., \(C = f(X,Y)\). This means the conditional entropy \(H(C|X,Y) = 0\). The variable \(C\) is designed to capture specific interaction patterns between modalities in \(X\) that are relevant for predicting \(Y\).
    \end{enumerate}

    We begin by expressing \(\Delta\) using a standard information-theoretic identity. We have the following identity:
    \begin{equation}    
        \begin{split}
            \Delta &= \mathbb{E}_{z,y} \log \frac{p(z,y)}{p(z)p(y)} -\mathbb{E}_{z,y,c} \log 
            \frac{p(z,y|c)}{p(z|c) p(y|c)} \\
            &= \mathbb{E}_{z,y,c} \log \frac{p(y|c)p(z|c)p(z,y)}{p(y)p(z)p(z,y|c)} \\
            &= I(Y; C) + I(Z; C) - I(Z, Y; C).
        \end{split}
    \end{equation}
    This identity follows from the chain rule of mutual information, as \(I(Z,Y;C) = I(Z;C) + I(Y;C|Z)\) and \(I(Z;Y) - I(Z;Y|C) = I(Y;C) - I(Y;C|Z)\).

    Next, we apply the Data Processing Inequality (DPI). From our first assumption (\(Z=\phi(X)\)), the pair \((Z,Y)\) is a function of the pair \((X,Y)\). This implies the Markov chain \((X,Y) \to (Z,Y)\). The DPI states that post-processing cannot increase information, so for any variable \(C\), we have:
    \begin{equation}
        I(Z,Y;C) \leq I(X,Y;C).
    \end{equation}
    Substituting this into our expression for \(\Delta\) yields the inequality:
    \begin{equation}
        \Delta \geq I(Y;C) + I(Z;C) - I(X,Y;C).
    \end{equation}

    Now, we use our second assumption, that \(C\) is a deterministic function of \(X\) and \(Y\). This means the conditional entropy \(H(C|X,Y) = 0\). By the definition of mutual information:
    \begin{equation}
        I(X,Y;C) = H(C) - H(C|X,Y) = H(C) - 0 = H(C).
    \end{equation}
    Replacing \(I(X,Y;C)\) with \(H(C)\) in our inequality for \(\Delta\), we get:
    \begin{equation}
        \Delta \geq I(Y;C) + I(Z;C) - H(C).
    \end{equation}

    To simplify further, we use the definition of mutual information for \(I(Z;C)\):
    \begin{equation}
        I(Z;C) = H(C) - H(C|Z).
    \end{equation}
    Rearranging this gives \(I(Z;C) - H(C) = -H(C|Z)\). Substituting this into the inequality for \(\Delta\):
    \begin{equation}
        \Delta \geq I(Y;C) - H(C|Z).
    \end{equation}

    Finally, by substituting back the definition of \(\Delta\), we arrive at the desired lower bound:
    \begin{equation}
        I(Z;Y) - \mathbb{E}_{c}[I(Z;Y|c)] \geq I(Y;C) - H(C|Z),
    \end{equation}
    which can be rearranged to conclude the proof:
    \begin{equation}
        I(Z;Y) \geq \mathbb{E}_{c}[I(Z;Y|c)] - H(C|Z) + I(Y;C).
    \end{equation}
    \end{proof}

\subsection{Analysis for Theorem 1}

Theorem 1 establishes a lower bound on the learned multimodal information $I(Z; Y)$. As derived in its proof, this bound is given by:
\begin{equation*}
    I(Z;Y) \geq \mathbb{E}_{c}[I(Z;Y|c)] - H(C|Z) + I(Y;C).
\end{equation*}
To maximize this lower bound and thus enhance the overall multimodal information captured by the model, we must consider the roles of its constituent terms. The term $I(Y; C)$ is determined by the intrinsic data distribution and represents the total information about the target $Y$ contained in the interaction composition $C$. The other two terms, $\mathbb{E}_{c}[I(Z;Y|c)]$ and $H(C|Z)$, are directly influenced by the learned representation $Z$.

Firstly, $\mathbb{E}_{c}[I(Z;Y|c)]$ represents the expected conditional mutual information between the representation $Z$ and the target $Y$, given the interaction composition $c$. This term quantifies how well the model captures task-relevant information for specific interaction patterns. Maximizing this term implies that the model effectively learns information across diverse interaction dynamics, which is crucial for robust multimodal learning.

Secondly, $H(C|Z)$ is the conditional entropy of the interaction variable $C$ given the learned representation $Z$. This term measures the uncertainty remaining about the interaction composition $C$ after observing $Z$. A lower $H(C|Z)$ indicates that $Z$ is a better predictor of $C$, meaning the representation effectively captures the underlying interaction patterns. According to the Data Processing Inequality, $H(C|Z) \geq H(C|X)$, which means $H(C|X)$ serves as a fundamental lower bound for $H(C|Z)$, representing the inherent uncertainty of interaction given the raw data. To maximize the overall multimodal information $I(Z;Y)$, it is crucial for the representation $Z$ to minimize $H(C|Z)$, thereby learning to predict the intrinsic interaction compositions $C$ as accurately as possible.

In summary, the proof of Theorem 1 underscores that the ability of multimodal representations to effectively capture information across diverse interaction types (maximizing $\mathbb{E}_{c}[I(Z;Y|c)]$) and to accurately predict these intrinsic interaction patterns (minimizing $H(C|Z)$) are critical factors for achieving comprehensive and high-quality multimodal information learning.

\subsection{Explanation of decomposition}
\label{explanation_for_decom}

\subsubsection{Intra-Modal Decomposition}

Here, we provide a detailed explanation of how our proposed decomposition framework operates. The core objective is to decompose a given representation $Z$ into two latent components, $N$ and $M$. The decomposition aims to satisfy two primary conditions: (1) the components $N$ and $M$ should be statistically independent, minimizing their mutual information $I(N; M)$, and (2) they should collectively preserve the information content of the original representation $Z$. To achieve this, we employ a variational approach.

We begin by deriving the Evidence Lower Bound (ELBO) for the marginal log-likelihood of $Z$, $\log p(z)$. We assume that $z$ is generated from latent variables $n$ and $m$ drawn from independent priors, i.e., $p(n, m) = p(n)p(m)$. We introduce a factorized variational posterior $q(n, m|z) = q(n|z)q(m|z)$ to approximate the true posterior $p(n, m|z)$. By applying Jensen's inequality, we obtain the ELBO:
\begin{equation}
\begin{aligned}
    \log p(z) &= \log \int p(z|n,m)p(n)p(m) \, dv \, dt \\
    &\geq \mathbb{E}_{q(n|z), q(m|z)} \left[ \log \left( \frac{p(z|n,m)p(n)p(m)}{q(n|z)q(m|z)} \right) \right] \\
    &= \mathbb{E}_{q(n|z), q(m|z)} [\log p(z|n,m)] \\
    &\quad + \mathbb{E}_{q(n|z)} \left[ \log \frac{p(n)}{q(n|z)} \right] + \mathbb{E}_{q(m|z)} \left[ \log \frac{p(m)}{q(m|z)} \right] \\
    &= \mathbb{E}_{q(n|z), q(m|z)} [\log p(z|n,m)] \\
    &\quad - KL(q(n|z) || p(n)) - KL(q(m|z) || p(m)).
\end{aligned}
\label{ELBo}
\end{equation}
Maximizing this lower bound involves optimizing three terms. The first term, $\mathbb{E}_{q(n|z), q(m|z)} [\log p(z|n,m)]$, is the reconstruction term, which encourages the latent components $N$ and $M$ to accurately reconstruct the original representation $Z$. The other two terms are regularization terms that minimize the Kullback-Leibler (KL) divergence between the variational posteriors ($q(n|z)$, $q(m|z)$) and their respective priors ($p(n)$, $p(m)$). Typically, the priors are chosen to be standard Gaussian distributions, $\mathcal{N}(0, I)$, to encourage well-structured latent spaces.

To understand why this decomposition architecture achieves disentanglement, we can analyze its connection to information theory. The goal is to decompose a representation $Z$ into two statistically independent components, $N$ and $M$. This can be framed as an optimization problem aimed at minimizing their mutual information, $I(N; M)$, while ensuring that $N$ and $M$ collectively retain the information from $Z$.

From an information-theoretic standpoint, minimizing $I(N; M)$ is equivalent to maximizing the negative interaction information, $I(Z; M, N) - I(Z; M) - I(Z; N)$. This objective encourages the joint representation $(N, M)$ to be predictive of $Z$ (maximizing $I(Z; M, N)$) while penalizing the information that $N$ and $M$ individually share with $Z$ (minimizing $I(Z; M)$ and $I(Z; N)$). This process effectively isolates the unique contributions of $N$ and $M$ and minimizes their redundant overlap.

This equivalence relies on the assumption that $N$ and $M$ are conditionally independent given $Z$, i.e., $p(n,m|z) = p(n|z)p(m|z)$. This is a natural assumption for a model designed to decompose $Z$ into distinct factors and is explicitly enforced in our variational framework by the factorized posterior $q(n,m|z) = q(n|z)q(m|z)$. The derivation is as follows:
\begin{equation}
\begin{aligned}
    &I(Z; M, N) - I(Z; M) - I(Z; N) \\
    &= \mathbb{E}_{p(z,v,m)} \left[ \log \frac{p(z,m,n) p(z) p(m) p(n)}{p(m,n) p(z,m) p(z,v)} \right] \\
    &= \mathbb{E}_{p(z,v,m)} \left[ \log \frac{p(m,n|z) p(m) p(n)}{p(m,n) p(m|z) p(n|z)} \right] \\
    &= \mathbb{E}_{p(z,v,m)} \left[ \log \frac{p(m|z)p(n|z) p(m) p(n)}{p(m,n) p(m|z) p(n|z)} \right] \\
    &= \mathbb{E}_{p(m,n)} \left[ \log \frac{p(m)p(n)}{p(m,n)} \right] = -I(M; N).
\end{aligned}
\label{eq:interaction_info_mi}
\end{equation}
Thus, maximizing the interaction information objective is equivalent to minimizing the mutual information $I(M;N)$. We now show that maximizing the ELBO from Equation \ref{ELBo} aligns with this information-theoretic objective. The ELBO can be expressed as:
\begin{equation}
\begin{aligned}
    \mathcal{L}_{\text{ELBO}} &= \underbrace{\mathbb{E}_{q(n|z), q(m|z)} [\log p(z|n,m)]}_{\text{Reconstruction}} \\
    &\quad - \underbrace{KL(q(n|z) || p(n))}_{\text{Regularization for } N} \\
    &\quad - \underbrace{KL(q(m|z) || p(m))}_{\text{Regularization for } M}.
\end{aligned}
\end{equation}
Each term in the ELBO corresponds to a term in our information-theoretic objective:
\begin{enumerate}
    \item \textbf{Reconstruction Term}: Maximizing the reconstruction term, $\mathbb{E}[\log p(z|n,m)]$, is equivalent to minimizing the conditional entropy $H(Z|V,M)$. Since $I(Z; V, M) = H(Z) - H(Z|V,M)$, this term effectively maximizes the joint mutual information $I(Z; V, M)$, ensuring that the latent components collectively preserve information about $Z$.
    \item \textbf{Regularization Terms}: The KL divergence terms serve as variational upper bounds on the mutual information between the representation and the latent components. Specifically, we have:
    \begin{equation}
    \begin{aligned}
        I(Z;M) &= \mathbb{E}_{p(z,m)}\left[\log \frac{p(m|z)}{p(m)}\right]\leq \mathbb{E}_{p(z)}[KL(q(m|z) || p(m))], \\
        I(Z;N) &= \mathbb{E}_{p(z,v)}\left[\log \frac{p(n|z)}{p(n)}\right] \leq \mathbb{E}_{p(z)}[KL(q(n|z) || p(n))].
    \end{aligned}
    \label{KL_appendix}
    \end{equation}
    Maximizing the ELBO involves minimizing these KL terms, which in turn minimizes the upper bounds on $I(Z;M)$ and $I(Z;N)$.
\end{enumerate}
Therefore, maximizing the ELBO in Equation \ref{ELBo} implicitly encourages maximizing $I(Z; V, M)$ while minimizing $I(Z; M)$ and $I(Z; N)$. This aligns directly with the objective of maximizing $I(Z; M, N) - I(Z; M) - I(Z; N)$, which, as shown in Equation \ref{eq:interaction_info_mi}, is equivalent to minimizing $I(M;N)$. This connection demonstrates that our variational decomposition framework is principled and effectively promotes the learning of disentangled latent representations.

\subsubsection{Consistency Decomposition}

After obtaining the intra-modality feature $M^{(m)}$, we propose a consistency decomposition to separate it into a modality-specific vector $U^{(m)}$ and a consistency/shared vector $R$. The objective is to learn representations that capture shared information in $R$ while isolating unique, modality-specific information in $U^{(m)}$. This is achieved by maximizing the following objective function:
\begin{equation}
\label{decomp_2_appendix}
\max ~ 2 I(M^{(1)}; M^{(2)}; R) - I(U^{(1)}; R) - I(U^{(2)}; R).
\end{equation}
This objective encourages $R$ to capture information common to both $M^{(1)}$ and $M^{(2)}$ (interaction information \\ $I(M^{(1)}; M^{(2)}; R)$) while being independent of the unique components $U^{(1)}$ and $U^{(2)}$.

To better understand the optimization process, we can decompose this objective. By focusing on the components related to modality $M^{(1)}$, the objective can be rewritten into three interpretable terms. The derivation is as follows, assuming a symmetric treatment for $M^{(2)}$:
\begin{equation}
\begin{aligned}
    &I(M^{(1)}; M^{(2)}; R) - I(U^{(1)}; R) \\
    &= I(M^{(1)}; R) + I(M^{(2)}; R) \\  &\quad\quad- I(M^{(1)}, M^{(2)}; R) - I(U^{(1)}; R) \\
    &= \underbrace{I(M^{(1)}; U^{(1)}, R)}_{\text{Reconstruction}} - \underbrace{I(M^{(1)}; U^{(1)})}_{\text{Compactness}} - \underbrace{I(M^{(2)}; R | M^{(1)})}_{\text{Redundancy}}.
\end{aligned}
\label{decomp_2_appendix_2}
\end{equation}
The last equation is similar to Equation \ref{eq:interaction_info_mi}.
This decomposition reveals three key optimization goals:
\begin{enumerate}
    \item \textbf{Maximizing Reconstruction}: The term $I(M^{(1)}; U^{(1)}, R)$ corresponds to a reconstruction objective. Maximizing it is equivalent to minimizing the conditional entropy $H(M^{(1)}|U^{(1)}, R)$, ensuring that the original feature $M^{(1)}$ can be accurately reconstructed from its specific component $U^{(1)}$ and the shared component $R$.

    \item \textbf{Maximizing Compactness}: The term $-I(M^{(1)}; U^{(1)})$ encourages the specific representation $U^{(1)}$ to be a compact, minimal representation of the information in $M^{(1)}$, following the information bottleneck principle. This is analogous to the KL divergence regularization term in Equation \ref{KL_appendix}.

    \item \textbf{Minimizing Redundancy}: The term $-I(M^{(2)}; R | M^{(1)})$ aims to minimize the conditional mutual information between $M^{(2)}$ and $R$ given $M^{(1)}$. This encourages $R$ to only contain information that is shared between $M^{(1)}$ and $M^{(2)}$, effectively isolating the redundant (shared) information from the unique aspects of each modality.
\end{enumerate}

The third term, the conditional mutual information $I(M^{(2)}; R | M^{(1)})$, is often intractable to compute directly because it requires evaluating $q(r|m^{(1)}) = \int q(r|m^{(1)}, m^{(2)})p(m^{(2)}|m^{(1)})dm^{(2)}$.
\begin{equation}
\begin{aligned}
  &I(M^{(2)}; R | M^{(1)}) = \mathbb{E}_{p(m^{(1)}, m^{(2)}) q(r|m^{(1)}, m^{(2)})} \left[ \log \frac{q(r | m^{(1)}, m^{(2)})}{q(r|m^{(1)})} \right].
\end{aligned}
\label{decomp_2_appendix_3}
\end{equation}
To address this, we introduce a variational distribution $v_\phi(r|m^{(1)})$ to approximate the true posterior $q(r|m^{(1)})$. This allows us to derive a tractable variational upper bound on this term:
\begin{equation}
\begin{aligned}
    ~& \mathbb{E}_{p(m^{(1)}, m^{(2)}) q(r|m^{(1)}, m^{(2)})} \left[ \log \frac{q(r | m^{(1)}, m^{(2)}) v_\phi(r|m^{(1)})}{v_\phi(r|m^{(1)}) q(r|m^{(1)})} \right] \\
    &= \mathbb{E}_{p(m^{(1)}, m^{(2)})} \left[ \text{KL}(q(r|m^{(1)}, m^{(2)}) || v_\phi(r|m^{(1)})) \right] \\
    &~- \mathbb{E}_{p(m^{(1)})} \left[ \text{KL}(q(r|m^{(1)}) || v_\phi(r|m^{(1)})) \right] \\
    &\leq \mathbb{E}_{p(m^{(1)}, m^{(2)})} \left[ \text{KL}(q(r|m^{(1)}, m^{(2)}) || v_\phi(r|m^{(1)})) \right].
\end{aligned}
\label{decomp_2_appendix_4}
\end{equation}
The inequality holds because the KL divergence is non-negative. By minimizing this upper bound, we effectively minimize $I(M^{(2)}; R | M^{(1)})$. This strategy allows for the practical decomposition of features into their unique and redundant components.
In summary, the complete objective function $\mathcal{L}_{\text{c}}$ is given by:
\begin{equation}
\begin{aligned}
    \mathcal{L}_{\text{c}} & = \underbrace{\mathbb{E}_{q(r, u^{(1)}|m^{(1)})p(m^{(2)}|m^{(1)})} [\log p(m^{(1)}|r, u^{(1)})]}_{\text{Reconstruction}} \\
    &\quad+ \underbrace{\mathbb{E}_{q(r, u^{(2)}|m^{(2)})p(m^{(1)}|m^{(2)})} [\log p(m^{(2)}|r, u^{(2)})]}_{\text{Reconstruction}} \\
    &\quad - \underbrace{KL((q(r|m^{(1)}) || v(r)) + KL(q(r|m^{(2)}) || v(r)))}_{\text{Regularization for } R} \\
    &\quad - \underbrace{KL(q(u^{(1)}|m^{(1)}) || p(u^{(1)}))}_{\text{Regularization for } U^{(1)}} \\
    &\quad - \underbrace{KL(q(u^{(2)}|m^{(2)}) || p(u^{(2)}))}_{\text{Regularization for } U^{(2)}}.
\end{aligned}
\label{decomp_2_appendix_5}
\end{equation}

The final loss function is a composition of the Evidence Lower Bound (ELBO), denoted as $f(z,y) = \mathcal{L}_{\text{ELBO}}$, and the consistency loss, $g(z,y) = \mathcal{L}_{\text{c}}$.

\section{Experiment}
\label{experimental_appendix_overall}
\subsection{Experimental setting}
\label{experimental_setting}

\paragraph{Real-world Dataset}

\textbf{BRCA}~\cite{wang2020moronet}: A dataset for breast invasive carcinoma PAM50 subtype classification, including 875 samples across 5 classes. We utilize mRNA expression and DNA methylation modalities.\\
\textbf{ROSMAP}~\cite{de2018multi}: A dataset targeted at Alzheimer's Disease diagnosis, containing 351 samples spanning 2 classes. Consistent with BRCA, we utilize mRNA expression and DNA methylation modalities.\\
\textbf{CREMA-D}~\cite{cao2014crema}: An emotion recognition dataset consisting of 7,442 video clips. It covers six emotional states (anger, happiness, sadness, neutrality, disgust, and fear) using Audio and Visual modalities.\\
\textbf{Kinetic-Sounds (KS)}~\cite{arandjelovic2017look}: A multimodal action recognition dataset containing 19,000 ten-second clips. It includes 31 human action classes selected from the Kinetics dataset, utilizing Audio and Visual modalities.\\
\textbf{CMU-MOSEI}~\cite{zadeh2018multimodal}: A large-scale sentiment analysis dataset including 23,453 video segments with Audio, Visual, and Text modalities. In this work, we focus specifically on the Visual and Text modalities for a challenging three-way sentiment classification task (positive, negative, neutral).\\
\textbf{UCF101}~\cite{soomro2012ucf101}: An action recognition benchmark containing 13,320 videos across 101 human action classes. We employ RGB and Optical Flow modalities.\\
\textbf{UR-FUNNY}~\cite{hasan2019ur}: A large-scale dataset for humor detection, including over 16,000 samples from TED talks. It integrates Text, Visual, and Acoustic modalities to capture diverse speakers and topics.\\
\textbf{VGGSound}~\cite{chen2020vggsound}: A large-scale audio-visual dataset consisting of short clips from over 200,000 videos, designed to capture sound events in diverse acoustic environments.

\paragraph{Training Details}
All experiments used a batch size of 64. CNN-based models were optimized using SGD (momentum 0.9, weight decay 1e-4) with dataset-specific learning rates: 1e-2 (KS/UCF) and 4e-3 (CREMA-D). For the Transformer-based MOSEI experiments, we used Adam with a learning rate of 1e-4. Training for our proposed method proceeded in three stages: (1) training the intra-modal decomposition for 10 epochs; (2) freezing this module while training subsequent consistency decomposition for 5–10 epochs; and (3) unfreezing the decomposition module to jointly fine-tune the full model with a learning rate between 1e-4 and 1e-5.

\begin{wraptable}{r}{0.55\linewidth}
\centering
\small
\vspace{-1em}
\caption{Performance on diverse tasks with various modality combinations. Best results are \textbf{bolded}.}
\label{extensive_data}
\setlength{\tabcolsep}{2.5pt}
\begin{tabular}{l|cc|cc|cc}
\toprule
\multirow{2}{*}{\textbf{Method}} & \multicolumn{2}{c|}{\textbf{UR-FUNNY}} & \multicolumn{2}{c|}{\textbf{ROSMAP}} & \multicolumn{2}{c}{\textbf{BRCA}} \\
\cmidrule(lr){2-3} \cmidrule(lr){4-5} \cmidrule(lr){6-7}
 & ACC & mAP & ACC & mAP & ACC & mAP \\
\midrule
Joint      & 63.71 & 67.99 & 82.08 & 85.57 & 88.97 & 86.41 \\
Ensemble   & 62.57 & 65.92 & 83.02 & 86.32 & 89.35 & 86.62 \\
DMIL       & \textbf{65.50} & \textbf{69.15} & \textbf{85.85} & \textbf{88.84} & \textbf{89.73} & \textbf{87.33} \\
\bottomrule
\end{tabular}
\vspace{-1em}
\end{wraptable}
\paragraph{Data Preprocessing} For datasets containing videos, we extract frames at 1 fps. In the KS dataset, we uniformly sample 3 frames from each 10-frame clip as visual inputs. For CREMA-D, 1 frame is extracted from each video. In UCF101, we select 2 RGB frames and 5 optical flow frames per video. For VGGSound, 3 frames are used as visual inputs. Additionally, we conduct experiments using more frames for both the KS and CREMA-D datasets; see Section \ref{more_temporal} for details.

  \subsection{Model architecture}
\label{sec_architecture}

\begin{wrapfigure}{r}{0.5\linewidth}
  \centering
  \vspace{-1em}
  \includegraphics[width=\linewidth]{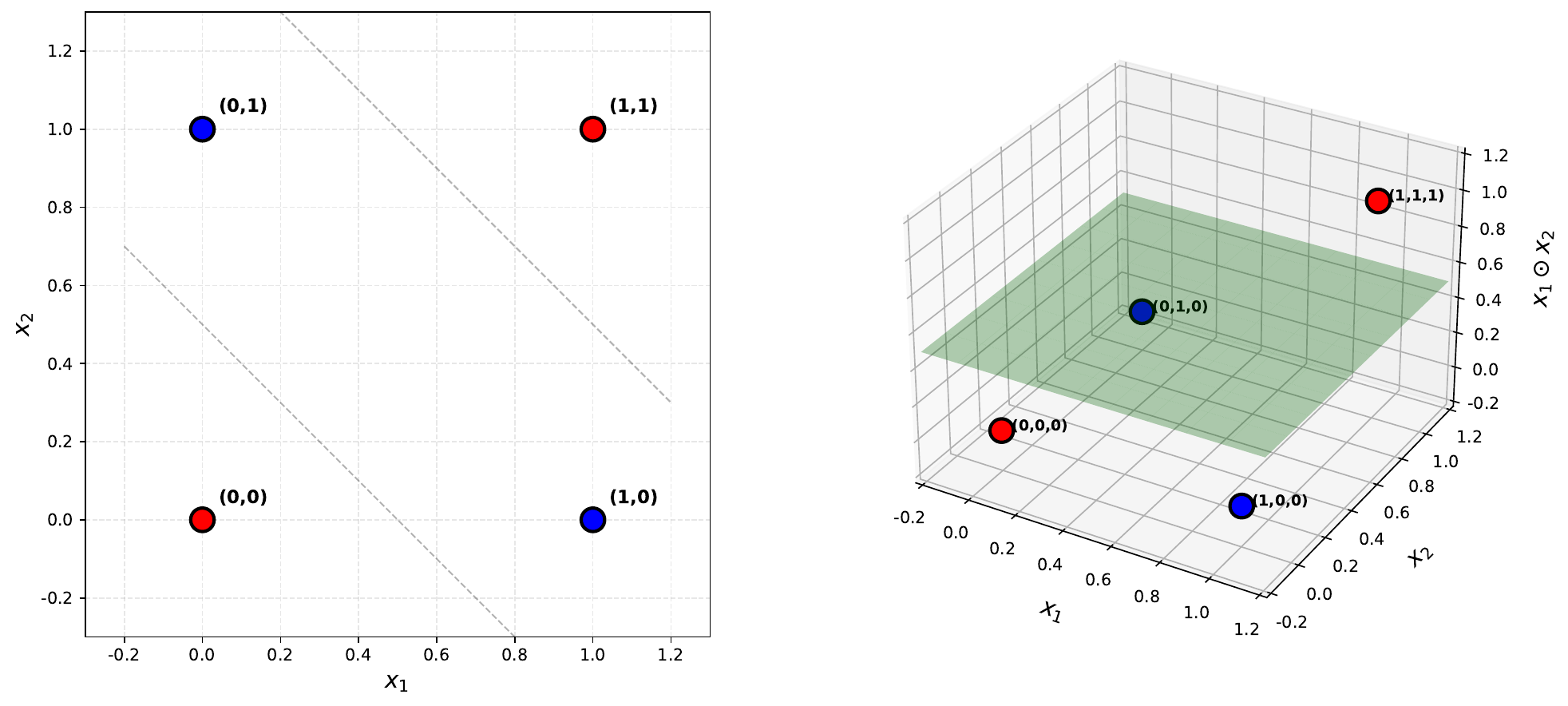}
  \caption{Illustration of the XOR data, which is a simple example of synergistic interaction. Left: The features $(x_1, x_2)$ are not linearly separable. Right: By adding the interaction term $(x_1 \cdot x_2)$, the data becomes linearly separable in 3D.}
  \label{xor_explanation}
\end{wrapfigure}
  
\paragraph{Architecture Description.}In this section, we detail the architecture of our Decomposition-based Multimodal Interaction Learning (DMIL) model, as illustrated in Figure 3. The DMIL framework incorporates two distinct decomposition modules. First, raw inputs from $m$ modalities, denoted as $X^{(m)}$, are mapped into feature representations $Z^{(m)}$ via modality-specific encoders $\phi^{(m)}$. These features are subsequently decomposed to capture inter-modal interactions. Each decomposition module is built upon a Variational Autoencoder (VAE) framework, wherein the encoders—composed of Multi-Layer Perceptrons (MLPs)—predict the mean and variance of the latent distributions. Correspondingly, the decoders employ multi-layer networks designed to minimize information loss during reconstruction. To extract the redundant component, we enforce feature alignment across modalities by minimizing the Kullback-Leibler (KL) divergence between their corresponding distributions.

\paragraph{Synergistic Interaction.}The synergistic component is designed to capture information that is not present in any single modality but emerges solely through their integration. Therefore, designing an effective fusion mechanism is crucial. While simple concatenation is a common approach, it is often insufficient for modeling complex interactions. For instance, in a typical synergistic scenario such as the XOR problem, data points are not linearly separable in the original feature space. This non-linearity hinders the model's ability to learn synergistic interactions effectively. As shown in \autoref{xor_explanation} a, linear separation is unachievable with simple inputs. To address this, we introduce an element-wise product term. The formulation is defined as: $S= f_{\text{linear}}(n^{(1)}, n^{(2)}, n^{(1)} \odot n^{(2)})$, where $\odot$ denotes element-wise multiplication. The introduction of this multiplicative term simplifies the interaction modeling: it renders the XOR data linearly separable, as demonstrated in \autoref{xor_explanation} b. Furthermore, this product term explicitly captures high-order interactions, which are essential for defining synergy.

\begin{table}[t]
\centering
\small
\renewcommand{\arraystretch}{1.1}
\caption{Performance comparison across different backbone architectures (CNN and Vision Transformer) on KS and VGG datasets.}
\label{backbone_change}
\setlength{\tabcolsep}{5pt}
\begin{tabular}{lcccccccc}
\toprule
\multirow{2}{*}{\textbf{Method}} 
    & \multicolumn{2}{c}{\textbf{KS (CNN)}} 
    & \multicolumn{2}{c}{\textbf{VGG (CNN)}} 
    & \multicolumn{2}{c}{\textbf{KS (ViT)}} 
    & \multicolumn{2}{c}{\textbf{VGG (ViT)}} \\
\cmidrule(l){2-3} \cmidrule(l){4-5} \cmidrule(l){6-7} \cmidrule(l){8-9}
 & \textbf{ACC} & \textbf{mAP} 
 & \textbf{ACC} & \textbf{mAP} 
 & \textbf{ACC} & \textbf{mAP}
 & \textbf{ACC} & \textbf{mAP} \\
\midrule
Joint     & 85.07          & 90.93          & 56.79          & 59.21          & 67.81          & 73.11          & 45.09          & 44.84          \\
Ensemble  & 85.86          & 91.90          & 57.46          & 59.85          & 71.80          & 79.03          & 48.16          & 50.03          \\
\textbf{DMIL}   & \textbf{86.72} & \textbf{92.15} & \textbf{58.05} & \textbf{60.66} & \textbf{74.20} & \textbf{81.50} & \textbf{52.28} & \textbf{53.91} \\
\bottomrule
\end{tabular}
    \vspace{-1em}
\end{table}

\subsection{Expanded evaluation on diverse domains}
\subsubsection{Generalization across modalities and tasks} 
To comprehensively validate the robustness of our method, we introduced three additional datasets: UR-FUNNY, ROSMAP, and BRCA. These were selected to cover distinct tasks and modality combinations beyond standard audio-visual benchmarks. UR-FUNNY focuses on humor detection using Visual (V) and Text (T) modalities, which are known to contain strong synergistic information~\cite{liang2023quantifying}. ROSMAP and BRCA are biological datasets involving mRNA and DNA methylation (METH) modalities, presenting a different challenge in feature interaction.
  
\label{expanded_evaluation}
  \begin{wraptable}{r}{0.5\linewidth}
\centering
\small
\caption{Performance with richer temporal dynamics on CREMA-D (5 frames) and KS (8 frames).}
\label{multiple-frame}
\setlength{\tabcolsep}{3pt}
\begin{tabular}{lcccc}
\toprule
\multirow{2}{*}{\textbf{Method}} & \multicolumn{2}{c}{\textbf{CREMA-D (5F)}} & \multicolumn{2}{c}{\textbf{KS (8F)}} \\
\cmidrule(lr){2-3}\cmidrule(lr){4-5}
 & ACC & mAP & ACC & mAP \\
\midrule
Joint     & 79.44 & 87.69 & 85.39 & 91.59 \\
Ensemble  & 81.32 & 90.46 & 86.34 & 92.76 \\
\textbf{DMIL} & \textbf{83.16} & \textbf{91.83} & \textbf{87.44} & \textbf{93.15} \\
\bottomrule
\end{tabular}
  \vspace{-1em}
\end{wraptable}
  
The results are presented in Table \ref{extensive_data}. On the UR-FUNNY dataset, the Joint model outperforms the Ensemble baseline. This supports the idea that humor detection relies heavily on the synergy between vision and text, which Ensemble methods—designed to isolate modalities—often fail to capture. Conversely, on the biological datasets (ROSMAP and BRCA), the Ensemble model performs slightly better than the Joint model. Despite these variations in baseline performance, our DMIL method consistently achieves the highest accuracy and mAP across all datasets. This demonstrates that DMIL can adaptively learn interactions, whether the task requires capturing strong synergy (as in humor) or handling more independent modalities (as in biological analysis).

\subsubsection{Impact of richer temporal dynamics}
\label{more_temporal}
Previous studies highlight the importance of temporal dynamics in multimodal learning~\cite{bernin2018automatic}. Therefore, we investigated how increased temporal resolution affects model performance. We conducted experiments on the CREMA-D and KS datasets using inputs with richer temporal information: 5 frames for CREMA-D and 8 frames for KS. The results are presented in Table \ref{multiple-frame}. Compared to the baseline results in the main manuscript, CREMA-D shows significant improvement with added temporal information, whereas the gain on KS is more moderate. Notably, on CREMA-D, the ensemble model benefits more from the increased frame count than the joint model. This suggests that richer temporal dynamics strengthen the individual unimodal features, which the ensemble method captures more effectively. Crucially, our DMIL method continues to outperform these baselines, demonstrating its effectiveness with varying temporal dynamics.

\subsubsection{Various backbone}

Distinct backbone architectures process data differently, which affects how they capture and learn interactions. Following the initial experiments in Table 1, we conducted further validation on the KS and VGGSound datasets to examine the impact of architecture choice. Specifically, we introduced the widely used Vision Transformer (ViT) as the backbone for both modality encoders to provide a comparison with CNN-based models.
  
\begin{wraptable}{r}{0.45\linewidth}
\centering
\small
\caption{Weights of different interaction types learned on the CREMA-D and KS datasets.}
\label{tab:interaction_weights}
\setlength\tabcolsep{4pt}
\begin{tabular}{lcc}
\toprule
\textbf{Interaction Type} & \textbf{CREMA-D} & \textbf{KS} \\
\midrule
Redundancy    & 0.7471 & 0.5732 \\
Unique-Visual & 0.0650 & 0.2093 \\
Unique-Audio  & 0.1329 & 0.2096 \\
Synergy       & 0.0549 & 0.0078 \\
\bottomrule
\end{tabular}
\end{wraptable}
  
The results, presented in Table \ref{backbone_change}, indicate that the choice of backbone significantly influences baseline performance. We observed that ViT demonstrates weaker modeling capabilities for audio-visual understanding compared to CNN-based models. Additionally, the joint model marginally underperforms the ensemble model, particularly when using ViT. This is likely because audio-visual tasks are complex to learn, and ensemble methods are often more effective at preserving distinct uni-modal information. In contrast, our DMIL method consistently delivers superior performance across different backbones, demonstrating that it not only outperforms other approaches but also effectively learns interactions regardless of the architecture.

\subsubsection{Interaction Weight}

Our method employs an adaptive weighting mechanism to assign importance to different interaction components. Table \ref{tab:interaction_weights} presents the average weights learned across samples for the CREMA-D and KS datasets. We observe that Redundancy is the predominant component in both datasets. Notably, on CREMA-D, the weight for Audio Uniqueness significantly surpasses that of Visual Uniqueness. This suggests that the audio modality possesses stronger discriminative power than the visual modality for this task. In contrast, KS shows a balanced distribution between audio and visual unique weights ($0.2096$ vs. $0.2093$). These results demonstrate that our method effectively captures the intrinsic properties and primary information carriers of each dataset.

\paragraph{Interaction learning demonstration.}
While the previous section confirms DMIL's overall performance, it is crucial to verify that our proposed training strategy effectively learns the intended interaction components. To this end, we construct synthetic datasets from bit-wise features governed by logical relationships with the labels, including mixtures such as $1/2$ AND and $1/2$ XOR, $1/2$ OR and $1/2$ XOR, and $1/3$ AND, $1/3$ OR, and $1/3$ XOR.
We compare the interaction proportions estimated by DMIL against the CVX estimator~\cite{liang2023quantifying}, a specialized method for quantifying interaction quantity of Redundancy, Uniqueness, and Synergy. For DMIL, we derive these proportions from its decomposed feature components. The results in \autoref{bool_types} show that although DMIL is not explicitly designed for quantification, it implicitly learns the interaction composition.
Our framework's estimates are highly comparable to those of the CVX estimator, validating the effectiveness of our decomposition approach. Furthermore, in complex datasets mixing two or three interaction types, DMIL often achieves a closer approximation to the ground truth, demonstrating its robustness in capturing multifaceted interactions.

\begin{table}[t!]
\centering
\caption{Validation of interaction decomposition on synthetic Boolean datasets. The table compares the estimated proportions (\%) of Redundancy ($\tilde R$), Uniqueness ($\tilde U^{(1)}$,$\tilde U^{(2)}$), and Synergy ($\tilde S$) interaction for various logical combinations with the ground truth (Truth).}
\resizebox{\linewidth}{!}{
\begin{tabular}{l|cccc|cccc|cccc|cccc}
\toprule
\multirow{2}{*}{Method} & \multicolumn{4}{c|}{AND} & \multicolumn{4}{c|}{AND+XOR} & \multicolumn{4}{c|}{OR+XOR} & \multicolumn{4}{c}{AND+OR+XOR} \\
\cmidrule(lr){2-5} \cmidrule(lr){6-9} \cmidrule(lr){10-13} \cmidrule(lr){14-17}
 & $\tilde{R}$ & $\tilde{U}^{(1)}$ & $\tilde{U}^{(2)}$ & $\tilde{S}$ & $\tilde{R}$ & $\tilde{U}^{(1)}$ & $\tilde{U}^{(2)}$ & $\tilde{S}$ & $\tilde{R}$ & $\tilde{U}^{(1)}$ & $\tilde{U}^{(2)}$ & $\tilde{S}$ & $\tilde{R}$ & $\tilde{U}^{(1)}$ & $\tilde{U}^{(2)}$ & $\tilde{S}$ \\
\midrule
CVX & 38.0 & 0.0 & 0.9 & 61.1 & 21.4 & 0.0 & 0.3 & 78.2 & 21.2 & 0.0 & 0.6 & 78.3 & 33.7 & 0.4 & 0.1 & 65.8 \\
DMIL & 36.4 & 0.0 & 1.7 & 61.9 & 19.0 & 0.4 & 0.0 & 80.6 & 20.7 & 0.0 & 1.9 & 77.4 & 26.8 & 4.1 & 0.0 & 69.1 \\
Truth & 38.3 & 0.0 & 0.0 & 61.7 & 19.1 & 0.0 & 0.0 & 80.9 & 19.1 & 0.0 & 0.0 & 80.9 & 25.5 & 0.0 & 0.0 & 74.5 \\
\bottomrule
\end{tabular}
}
\label{bool_types}
\end{table}

\subsection{Synthetic dataset}
\label{description_for_synthetic}
In Section 4.3, we construct two types of synthetic data to facilitate our study. The first type is shown in Figure 1 of the main paper, where the data is crafted with pre-defined interactions to elucidate specific interaction dynamics. The second type derives from Boolean logic variables. Here, the interactions are inherently embedded within the Boolean logic itself, providing a general consideration for interaction analysis.

The data generation process for predefined interactions is executed in two sequential steps. Initially, the type of interaction for each sample is identified. We categorize potential interactions as \textit{R}edundancy, \textit{U}niqueness, and \textit{S}ynergy for each dataset. As illustrated in the teaser figure in the introduction, each dataset is composed of samples exhibiting one or two types of interactions. The proportion of each interaction type is quantified using a fractional notation, such as $\frac{1}{4}U + \frac{3}{4}R$. This indicates that $\frac{1}{4}$ of the samples display \textbf{Unique} interactions, while the remaining samples demonstrate \textbf{Redundant} interactions.

In the second step, data corresponding to the predefined interactions is constructed. Different networks are employed to encode specific interactions into some dimensions of input space, which are then concatenated to form a comprehensive sample. If a sample is defined as a certain interaction, other types of interaction are suppressed by introducing Gaussian noise into their respective dimensions. This approach ensures that each sample exclusively embodies one type of interaction, thereby facilitating the construction of datasets with specified interaction properties.

The dataset, derived from Boolean logical data, is generated in a structured manner. Initially, the specific Boolean logic within each sample is determined. We consider two to three types of Boolean logics—\textit{AND}, \textit{OR}, and \textit{XOR}—with each sample containing only one type. Each type of logic occupies an equivalent proportion within the dataset. Subsequently, these logics are encoded into the input space. Given that Boolean data inherently contains measurable interactions~\cite{bertschinger2014quantifying}, we utilize this data to validate our method for interaction decomposition.

\end{document}